\def\eqref#1{equation~\ref{#1}}
\def\1{\bm{1}}
\DeclareMathAlphabet{\mathsfit}{\encodingdefault}{\sfdefault}{m}{sl}
\SetMathAlphabet{\mathsfit}{bold}{\encodingdefault}{\sfdefault}{bx}{n}
\newtheorem{theorem}{Theorem}
\newtheorem{lemma}[theorem]{Lemma}
\title{Sub-Policy Adaptation for Hierarchical \\
Reinforcement Learning}
\author{%
  Alexander C. Li$^*$, Carlos Florensa$^*$, Ignasi Clavera, Pieter Abbeel \\
  University of California, Berkeley\\
  \texttt{\{alexli1, florensa, iclavera, pabbeel\}@berkeley.edu} \\
}
\begin{document}
\maketitle

\begin{abstract}
Hierarchical reinforcement learning is a promising approach to tackle long-horizon decision-making problems with sparse rewards. Unfortunately, most methods still decouple the lower-level skill acquisition process and the training of a higher level that controls the skills in a new task. Leaving the skills fixed can lead to significant sub-optimality in the transfer setting. In this work, we propose a novel algorithm to discover a set of skills and continuously adapt them along with the higher level even when training on a new task. Our main contributions are two-fold. First, we derive a new hierarchical policy gradient with an unbiased latent-dependent baseline, and we introduce Hierarchical Proximal Policy Optimization (HiPPO), an on-policy method to efficiently train all levels of the hierarchy jointly. Second, we propose a method for training time-abstractions that improves the robustness of the obtained skills to environment changes. Code and videos are available. 
\footnote{\url{sites.google.com/view/hippo-rl}}.
\let\thefootnote\relax\footnote{$^*$ Equal Contribution}
\end{abstract}

\section{Introduction}
Reinforcement learning (RL) has made great progress in a variety of domains, from playing games such as Pong and Go \citep{mnih2015dqn_human, silver2017alphago} to automating robotic locomotion \citep{schulman2015trpo, heess2017emergence}, dexterous manipulation \citep{florensa2017reverse, andrychowicz2018inhand}, and perception \citep{nair2018rig, florensa2018selfsupervised}. Yet, most work in RL is still learning from scratch when faced with a new problem. This is particularly inefficient when tackling multiple related tasks that are hard to solve due to sparse rewards or long horizons.

A promising technique to overcome this limitation is hierarchical reinforcement learning (HRL) \citep{sutton1999temporal}. In this paradigm, policies have several modules of abstraction, allowing to reuse subsets of the modules.
The most common case consists of temporal hierarchies \citep{Precup2000TemporalLearning, dayan1993feudal}, where a higher-level policy (manager) takes actions at a lower frequency, and its actions condition the behavior of some lower level skills or sub-policies. When transferring knowledge to a new task, most prior works fix the skills and train a new manager on top. Despite having a clear benefit in kick-starting the learning in the new task, having fixed skills can considerably cap the final performance on the new task \citep{florensa2017snn}. Little work has been done on adapting pre-trained sub-policies to be optimal for a new task.

In this paper, we develop a new framework for simultaneously adapting all levels of temporal hierarchies.
First, we derive an efficient approximated hierarchical policy gradient. 
The key insight is that, despite the decisions of the manager being unobserved latent variables from the point of view of the Markovian environment, from the perspective of the sub-policies they can be considered as part of the observation. We show that this provides a decoupling of the manager and sub-policy gradients, which greatly simplifies the computation in a principled way. It also theoretically justifies a technique used in other prior works \citep{frans2018mlsh}.
Second, we introduce a sub-policy specific baseline for our hierarchical policy gradient. We prove that this baseline is unbiased, and our experiments reveal faster convergence, suggesting efficient gradient variance reduction.
Then, we introduce a more stable way of using this gradient, Hierarchical Proximal Policy Optimization (HiPPO). This method helps us take more conservative steps in our policy space \citep{schulman2017ppo}, critical in hierarchies because of the interdependence of each layer. Results show that HiPPO is highly efficient both when learning from scratch, i.e. adapting randomly initialized skills, and when adapting pretrained skills on a new task. 
Finally, we evaluate the benefit of randomizing the time-commitment of the sub-policies, and show it helps both in terms of final performance and zero-shot adaptation on similar tasks.

\section{Preliminaries}
\begin{wrapfigure}{R}{0.6\textwidth}
\vspace{-1.3cm}
  \centering
  \includegraphics[trim=0 0.5cm 0 0.5cm, clip, width=\linewidth]{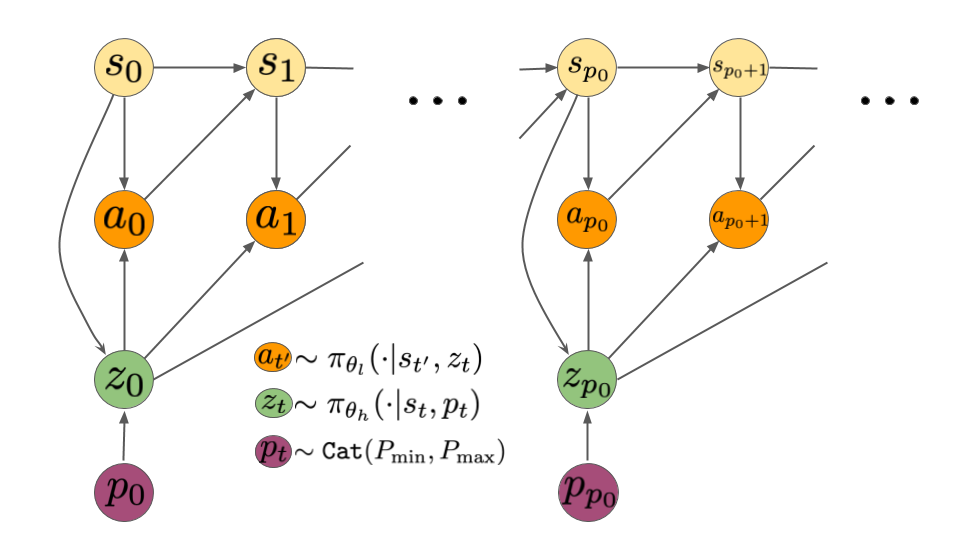}
  \caption{Temporal hierarchy studied in this paper. A latent code $z_{t}$ is sampled from the manager policy $\pi_{\theta_h}(z_{t}|s_{t})$ every $p$ time-steps, using the current observation $s_{kp}$. The actions $a_t$ are sampled from the sub-policy $\pi_{\theta_l}(a_t|s_{t}, z_{kp})$ conditioned on the same latent code from $t=kp$ to $(k+1)p - 1$}
  \label{fig:hierarchy}
  \vspace{-0.2cm}
\end{wrapfigure}
We define a discrete-time finite-horizon discounted Markov decision process (MDP) by a tuple $M = (\mathcal{S}, \mathcal{A}, \mathcal{P}, r, \rho_0, \gamma, H)$, where $\mathcal{S}$ is a state set, $\mathcal{A}$ is an action set, $\mathcal{P}: \mathcal{S} \times \mathcal{A} \times \mathcal{S} \rightarrow \mathbb{R}_{+}$ is the transition probability distribution, $\gamma \in [0, 1]$ is a discount factor, and $H$ the horizon. Our objective is to find a stochastic policy $\pi_\theta$ that maximizes the expected discounted return within the MDP, $\eta(\pi_\theta) = \mathbb{E}_\tau[\sum_{t=0}^H \gamma^t r(s_t, a_t)]$. We use $\tau = (s_0, a_0, ..., )$ to denote the entire state-action trajectory, where $s_0 \sim \rho_0(s_0)$, $a_t \sim \pi_\theta(a_t|s_t),$ and $s_{t+1} \sim \mathcal{P}(s_{t+1}|s_t, a_t)$.

In this work, we propose a method to learn a hierarchical policy and efficiently adapt all the levels in the hierarchy to perform a new task. We study hierarchical policies composed of a higher level, or manager $\pi_{\theta_h}(z_t|s_t)$, and a lower level, or sub-policy $\pi_{\theta_l}(a_{t'}|z_t, s_{t'})$. The higher level does not take actions in the environment directly, but rather outputs a command, or latent variable $z_t\in \mathcal{Z}$, that conditions the behavior of the lower level. We focus on the common case where $\mathcal{Z}=\mathbb{Z}_n$ making the manager choose among $n$ sub-policies, or skills, to execute.
The manager typically operates at a lower frequency than the sub-policies, only observing the environment every $p$ time-steps. When the manager receives a new observation, it decides which low level policy to commit to for $p$ environment steps by the means of a latent code $z$.  Figure~\ref{fig:hierarchy} depicts this framework where the high level frequency $p$ is a random variable, which is one of the contribution of this paper as described in Section \ref{sec:time}. Note that the class of hierarchical policies we work with is more restrictive than others like the options framework, where the time-commitment is also decided by the policy. Nevertheless, we show that this loss in policy expressivity acts as a regularizer and does not prevent our algorithm from surpassing other state-of-the art methods.

\section{Related Work}

There has been growing interest in HRL for the past few decades \citep{sutton1999temporal, Precup2000TemporalLearning}, but only recently has it been applied to high-dimensional continuous domains as we do in this work \citep{Kulkarni2016hrl, Daniel2016options}.
To obtain the lower level policies, or skills, most methods exploit some additional assumptions, like access to demonstrations \citep{Le2018hirl, Merel2019humanoids, Ranchod2015skill, Sharma2018directedInfoGAIL}, policy sketches \citep{Andreas2017modular}, or task decomposition into sub-tasks \citep{Ghavamzadeh2003HPGA, sohn2018subtask}. Other methods use a different reward for the lower level, often constraining it to be a ``goal reacher” policy, where the signal from the higher level is the goal to reach \citep{nachum2018hiro, levy2019HRLhindsight, vezhnevets2017fun}. These methods are very promising for state-reaching tasks, but might require access to goal-reaching reward systems not defined in the original MDP, and are more limited when training on tasks beyond state-reaching. Our method does not require any additional supervision, and the obtained skills are not constrained to be goal-reaching.

When transferring skills to a new environment, most HRL methods keep them fixed and simply train a new higher-level on top \citep{hausman2018embedding, heess2016modulated}. Other work allows for building on previous skills by constantly supplementing the set of skills with new ones \citep{shu2019hrl}, but they require a hand-defined curriculum of tasks, and the previous skills are never fine-tuned. Our algorithm allows for seamless adaptation of the skills, showing no trade-off between leveraging the power of the hierarchy and the final performance in a new task. Other methods use invertible functions as skills \citep{haarnoja2018hierarchical}, and therefore a fixed skill can be fully overwritten when a new layer of hierarchy is added on top. This kind of “fine-tuning” is promising, although similar to other works \citep{peng2019composable}, they do not apply it to temporally extended skills as we do here. 

One of the most general frameworks to define temporally extended hierarchies is the options framework \citep{sutton1999temporal}, and it has recently been applied to continuous state spaces \citep{bacon2017option}. One of the most delicate parts of this formulation is the termination policy, and it requires several regularizers to avoid skill collapse \citep{harb2017wait, Vezhnevets2016straw}. This modification of the objective may be difficult to tune and affects the final performance. Instead of adding such penalties, we propose to have skills of a random length, not controlled by the agent during training of the skills. The benefit is two-fold: no termination policy to train, and more stable skills that transfer better. 
Furthermore, these works only used discrete action MDPs. We lift this assumption, and show good performance of our algorithm in complex locomotion tasks. There are other algorithms recently proposed that go in the same direction, but we found them more complex, less principled (their per-action marginalization cannot capture well the temporal correlation within each option), and without available code or evidence of outperforming non-hierarchical methods \citep{smith2019inference}.

The closest work to ours in terms of final algorithm structure is the one proposed by \citet{frans2018mlsh}. Their method can be included in our framework, and hence benefits from our new theoretical insights. We introduce a modification that is shown to be highly beneficial: the random time-commitment mentioned above, and find that our method can learn in difficult environments without their complicated training scheme.

\section{Efficient Hierarchical Policy Gradients}

When using a hierarchical policy, the intermediate decision taken by the higher level is not directly applied in the environment. Therefore, technically it should not be incorporated into the trajectory description as an observed variable, like the actions. This makes the policy gradient considerably harder to compute.
In this section we first prove that, under mild assumptions, the hierarchical policy gradient can be accurately approximated without needing to marginalize over this latent variable.
Then, we derive an unbiased baseline for the policy gradient that can reduce the variance of its estimate. 
Finally, with these findings, we present our method, Hierarchical Proximal Policy Optimization (HiPPO), an on-policy algorithm for hierarchical policies, allowing learning at all levels of the policy jointly and preventing sub-policy collapse.

\subsection{Approximate Hierarchical Policy Gradient}
Policy gradient algorithms are based on the likelihood ratio trick~\citep{williams1992reinforce} to estimate the gradient of returns with respect to the policy parameters as
\begin{align}
\nabla_{\theta} \eta(\pi_{\theta}) = \mathbb{E}_{\tau}\big[\nabla_{\theta} \log P(\tau)R(\tau)\big] 
& \approx \frac{1}{N}\sum_{i=1}^n \nabla_{\theta} \log P(\tau_i)R(\tau_i) \label{eq:pg1}\\
& =\frac{1}{N}\sum_{i=1}^n\frac{1}{H}\sum_{t=1}^H\nabla_{\theta}\log\pi_{\theta}(a_t|s_t)R(\tau_i)  \label{eq:pg_by_timesteps}
\end{align}
In a temporal hierarchy, a hierarchical policy with a manager $\pi_{\theta_h}(z_t|s_t)$ selects every $p$ time-steps one of $n$ sub-policies to execute. These sub-policies, indexed by $z \in \mathbb{Z}_n$, can be represented as a single conditional probability distribution over actions $\pi_{\theta_l}(a_{t}|z_t, s_{t})$. This allows us to not only use a given set of sub-policies, but also leverage skills learned with Stochastic Neural Networks (SNNs)~\citep{florensa2017snn}. Under this framework, the probability of a trajectory $\tau=(s_0, a_0, s_1, \dots, s_H)$ can be written as
\begin{align} \label{eq:Ptau}
P(\tau) = \bigg(\prod_{k=0}^{H/p}\Big[\sum_{j=1}^n \pi_{\theta_h}(z_j|s_{kp})\prod_{t=kp}^{(k+1)p-1}\pi_{\theta_l}(a_t|s_t, z_j)\Big]\bigg)
\bigg[P(s_0)\prod_{t=1}^{H}P(s_{t+1}|s_t, a_t)\bigg].
\end{align}
The mixture action distribution, which presents itself as an additional summation over skills, prevents additive factorization when taking the logarithm, as from Eq.~\ref{eq:pg1} to \ref{eq:pg_by_timesteps}.  
This can yield numerical instabilities due to the product of the $p$ sub-policy probabilities. For instance, in the case where all the skills are distinguishable all the sub-policies' probabilities but one will have small values, resulting in an exponentially small value. In the following Lemma, we derive an approximation of the policy gradient, whose error tends to zero as the skills become more diverse, and draw insights on the interplay of the manager actions.

\begin{lemma}\label{lemma:gradient}
If the skills are sufficiently differentiated, then the latent variable can be treated as part of the observation to compute the gradient of the trajectory probability. Let  $\pi_{\theta_h}(z|s)$ and $\pi_{\theta_l}(a|s,z)$ be Lipschitz functions w.r.t. their parameters, and assume that $0<\pi_{\theta_l}(a|s, z_{j}) < \epsilon~\forall j \neq kp $,
then 
\begin{align}
    \nabla_\theta \log P(\tau) = \sum_{k=0}^{H/p} \nabla_\theta \log \pi_{\theta_h}(z_{kp}|s_{kp})
    + \sum_{t=0}^H \nabla_\theta \log \pi_{\theta_l}(a_t|s_t, z_{kp}) + \mathcal{O}(nH\epsilon^{p-1})
    \label{eq:approx_grad}
\end{align} 
\end{lemma}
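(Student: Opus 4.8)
The plan is to take logarithms in Eq.~\ref{eq:Ptau}, differentiate, and control the error one manager window at a time. Writing $\log P(\tau)$ as a sum over the $H/p$ windows plus the environment terms $\log P(s_0)+\sum_t \log P(s_{t+1}|s_t,a_t)$, the latter are independent of $\theta$ and vanish under $\nabla_\theta$. It therefore suffices, for each window $k$, to analyze $S_k=\sum_{j=1}^n \pi_{\theta_h}(z_j|s_{kp})\prod_{t=kp}^{(k+1)p-1}\pi_{\theta_l}(a_t|s_t,z_j)$ and show that $\nabla_\theta\log S_k$ matches the $k$-th summand of Eq.~\ref{eq:approx_grad} up to an $\mathcal{O}(np\,\epsilon^{p-1})$ error; summing over the $H/p$ windows then produces the claimed $\mathcal{O}(nH\epsilon^{p-1})$.

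First I would isolate the dominant summand. Letting $z_{kp}$ be the latent actually sampled in window $k$, I split $S_k=A_k+R_k$, where $A_k=\pi_{\theta_h}(z_{kp}|s_{kp})\prod_{t=kp}^{(k+1)p-1}\pi_{\theta_l}(a_t|s_t,z_{kp})$ is the active-skill term and $R_k$ collects the other $n-1$ skills. By the hypothesis $\pi_{\theta_l}(a_t|s_t,z_j)<\epsilon$ for every inactive $j$, each inactive product is at most $\epsilon^p$, and since $\pi_{\theta_h}\le 1$ we get $|R_k|\le(n-1)\epsilon^p=\mathcal{O}(n\epsilon^p)$. Differentiating one inactive term by the product rule yields either $\nabla_\theta\pi_{\theta_h}$ times a product of $p$ factors each below $\epsilon$, or $\pi_{\theta_h}$ times a sum of $p$ terms in which one sub-policy factor is differentiated and the remaining $p-1$ stay below $\epsilon$; the Lipschitz bound $\|\nabla_\theta\pi\|\le L$ then gives $\|\nabla_\theta R_k\|\le(n-1)(L\epsilon^p+pL\epsilon^{p-1})=\mathcal{O}(np\,\epsilon^{p-1})$. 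The key point is that the surviving order is $\epsilon^{p-1}$, not $\epsilon^p$, because differentiating a product of $p$ small factors must spare one of them; this is precisely where the exponent $p-1$ comes from.

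Finally, I would write $\nabla_\theta\log S_k=\nabla_\theta\log A_k+\nabla_\theta\log(1+R_k/A_k)$ and expand the correction as $(1+R_k/A_k)^{-1}\big(A_k^{-1}\nabla_\theta R_k-A_k^{-2}R_k\,\nabla_\theta A_k\big)$. With $A_k$ bounded away from zero, the bounds $|R_k|=\mathcal{O}(n\epsilon^p)$ and $\|\nabla_\theta R_k\|=\mathcal{O}(np\,\epsilon^{p-1})$ make this correction $\mathcal{O}(np\,\epsilon^{p-1})$. Since $\nabla_\theta\log A_k=\nabla_\theta\log\pi_{\theta_h}(z_{kp}|s_{kp})+\sum_{t=kp}^{(k+1)p-1}\nabla_\theta\log\pi_{\theta_l}(a_t|s_t,z_{kp})$, summing over $k$ reassembles the two explicit sums in Eq.~\ref{eq:approx_grad} with aggregate error $\mathcal{O}(nH\epsilon^{p-1})$.

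The main obstacle I anticipate is ensuring that $A_k$ remains bounded away from zero, since the $A_k^{-1}$ and $A_k^{-2}$ factors would magnify the error if the active skill could assign vanishing probability to the observed actions. The hypothesis that the skills are ``sufficiently differentiated'' must thus be read as simultaneously keeping the active-skill likelihood non-negligible while pushing the inactive likelihoods below $\epsilon$; making this reading quantitative, rather than the routine product-rule bookkeeping, is where the argument requires the most care.
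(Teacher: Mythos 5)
Your proposal is correct and follows essentially the same route as the paper's proof: isolate the summand corresponding to the sampled latent, bound each inactive term and its gradient by $\mathcal{O}(\epsilon^{p})$ and $\mathcal{O}(p\epsilon^{p-1})$ respectively via the product rule and Lipschitz continuity, and sum the per-window error over the $H/p$ windows. Your explicit $S_k=A_k+R_k$ split with the $\nabla_\theta\log(1+R_k/A_k)$ expansion is in fact a cleaner piece of bookkeeping than the paper's, and the caveat you flag --- that the active-skill likelihood $A_k$ must be bounded away from zero for the $A_k^{-1}$ factors not to inflate the error --- is a real implicit assumption that the paper's proof uses without stating.
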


\begin{proof}
See Appendix.
\end{proof}

Our assumption can be seen as having diverse skills. Namely, for each action there is just one sub-policy that gives it high probability. In this case, the latent variable can be treated as part of the observation to compute the gradient of the trajectory probability.
Many algorithms to extract lower-level skills are based on promoting diversity among the skills \citep{florensa2017snn, Eysenbach2019diayn}, therefore usually satisfying our assumption. We further analyze how well this assumption holds in our experiments section and Table \ref{tab:empirical_assumption}.


\subsection{Unbiased Sub-Policy Baseline}  
\label{sec:baselines}
The policy gradient estimate obtained when applying the log-likelihood ratio trick as derived above is known to have large variance. A very common approach to mitigate this issue without biasing the estimate is to subtract a baseline from the returns~\citep{Peters2008b}.
It is well known that such baselines can be made state-dependent without incurring any bias. However, it is still unclear how to formulate a baseline for all the levels in a hierarchical policy, since an action dependent baseline does introduce bias in the gradient \citep{tucker2018mirage}. It has been recently proposed to use latent-conditioned baselines \citep{weber2019computation}. Here we go further and prove that, under the assumptions of Lemma~\ref{lemma:gradient}, we can formulate an unbiased latent dependent baseline for the approximate gradient (Eq.~\ref{eq:approx_grad}).

\begin{lemma}\label{lemma:baselines}
For any functions $b_h:\mathcal{S}\rightarrow\mathbb{R}$ and $b_l:\mathcal{S}\times\mathcal{Z}\rightarrow\mathbb{R}$ we have: 
$$\mathbb{E}_\tau[\sum_{k=0}^{H/p} \nabla_\theta \log P(z_{kp}|s_{kp})b_h(s_{kp})] = 0 ~~~\text{  and  }~~~
\mathbb{E}_\tau[\sum_{t=0}^H \nabla_\theta \log \pi_{\theta_l}(a_t|s_t, z_{kp})b_l(s_{t}, z_{kp})] = 0$$
\end{lemma}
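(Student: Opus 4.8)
The plan is to prove each of the two baseline identities separately, since they have the same structure: in both cases we must show that the expectation of a score-function term multiplied by a function of the \emph{conditioning} variables (but not of the random variable whose log-probability is being differentiated) vanishes. The standard mechanism is the ``score function has zero mean'' identity, $\mathbb{E}_{x\sim p_\theta}[\nabla_\theta \log p_\theta(x)] = \int \nabla_\theta p_\theta(x)\,dx = \nabla_\theta \int p_\theta(x)\,dx = \nabla_\theta 1 = 0$. The key structural fact I will exploit is that $b_h$ depends only on $s_{kp}$ and $b_l$ depends only on $(s_t, z_{kp})$, so relative to the distribution over the differentiated variable these baselines are constants and can be pulled outside the inner integral/sum.

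For the manager baseline, I would first use linearity of expectation to move the sum over $k$ outside, reducing the claim to showing $\mathbb{E}_\tau[\nabla_\theta \log \pi_{\theta_h}(z_{kp}|s_{kp})\, b_h(s_{kp})] = 0$ for each fixed $k$. Then I would condition on $s_{kp}$ using the tower property of expectation, writing the outer expectation over trajectories as an expectation over $s_{kp}$ of the conditional expectation over $z_{kp}$ given $s_{kp}$. Inside that conditional expectation, $b_h(s_{kp})$ is constant and factors out, leaving $b_h(s_{kp})\sum_{z} \pi_{\theta_h}(z|s_{kp})\nabla_\theta \log \pi_{\theta_h}(z|s_{kp})$, which by the score-function identity equals $b_h(s_{kp})\nabla_\theta \sum_z \pi_{\theta_h}(z|s_{kp}) = b_h(s_{kp})\nabla_\theta 1 = 0$. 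Taking the outer expectation of zero gives the result. The sub-policy baseline proceeds identically, conditioning on $(s_t, z_{kp})$ and summing (or integrating) over $a_t$: the inner expectation becomes $b_l(s_t, z_{kp})\nabla_\theta \int \pi_{\theta_l}(a_t|s_t, z_{kp})\,da_t = 0$.

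The one subtlety I would be careful about is the joint dependence of the trajectory distribution on $\theta$: the score function trick in its clean form requires that the gradient acting on $\pi_{\theta_h}(z|s_{kp})$ sees only the factor $\pi_{\theta_h}(z|s_{kp})$ and not the rest of the trajectory distribution. This is legitimate here because we are working with the \emph{approximate} gradient of Lemma~\ref{lemma:gradient}, in which the log-probability has already been decomposed additively into independent manager and sub-policy terms; consequently each score term is attached to exactly one conditional factor, and the conditioning-variable structure guarantees the relevant marginalization. The main obstacle, and the place I would spend the most care, is justifying the conditioning/marginalization step rigorously: specifically, arguing that when we take the expectation over full trajectories and condition on $s_{kp}$ (respectively $(s_t, z_{kp})$), the variable $z_{kp}$ (respectively $a_t$) is drawn exactly according to the policy factor being differentiated, so that summing the score against that factor reproduces $\nabla_\theta$ of a normalized probability. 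Everything downstream is a one-line application of the score-function identity.
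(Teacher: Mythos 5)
Your proposal is correct and follows essentially the same route as the paper's proof: linearity of expectation over the sum, the tower property to condition on the baseline's arguments ($s_{kp}$ for the manager term, $(s_t, z_{kp})$ for the sub-policy term), factoring the baseline out of the inner expectation, and then the score-function zero-mean identity $\int \pi\,\nabla_\theta\log\pi = \nabla_\theta\int\pi = \nabla_\theta 1 = 0$. The subtlety you flag --- that the inner conditional distribution of $z_{kp}$ (resp.\ $a_t$) is exactly the policy factor being differentiated --- is precisely the step the paper also relies on (and writes out explicitly by expanding the expectation as an integral and undoing the gradient-log trick), so nothing further is needed.
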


\begin{proof}
See Appendix. 
\end{proof}
Now we apply Lemma \ref{lemma:gradient} and Lemma \ref{lemma:baselines} to Eq.~\ref{eq:pg1}. By using the corresponding value functions as the function baseline, the return can be replaced by the Advantage function $A(s_{kp}, z_{kp})$ (see details in \citet{schulman2016gae}), and we obtain the following approximate policy gradient expression: 
\begin{align*}
\hat{g} = \mathbb{E}_{\tau}\Big[(
\sum_{k=0}^{H/p}\nabla_\theta \log \pi_{\theta_h}(z_{kp}|s_{kp})A(s_{kp}, z_{kp}))+ (\sum_{t=0}^H \nabla_\theta \log \pi_{\theta_l}(a_t|s_t, z_{kp})A(s_t, a_t, z_{kp}))\Big]
\end{align*}
This hierarchical policy gradient estimate can have lower variance than without baselines, but using it for policy optimization through stochastic gradient descent still yields an unstable algorithm. In the next section, we further improve the stability and sample efficiency of the policy optimization by incorporating techniques from Proximal Policy Optimization \citep{schulman2017ppo}. 

\begin{figure*}[t]
\noindent
\begin{minipage}[t]{.6\textwidth}
\begin{algorithm}[H]
   \caption{$\texttt{HiPPO Rollout}$}
   \label{alg:rollout}
\begin{algorithmic}[1]
   \STATE {\bfseries Input:} skills $\pi_{\theta_l}(a|s,z)$, manager $\pi_{\theta_h}(z|s)$, time-commitment bounds $P_{\min}$ and $P_{\max}$, horizon $H$
   \STATE Reset environment: $s_0\sim \rho_0$, $t=0$.
   \WHILE{$t < H$}
     \STATE Sample time-commitment $p\sim\texttt{Cat}([P_{\min}, P_{\max}])$
     \STATE Sample skill $z_{t}\sim \pi_{\theta_h}(\cdot | s_{t})$
       \FOR{$t'=t \dots (t+p)$}  
         \STATE Sample action $a_{t'} \sim \pi_{\theta_l}(\cdot | s_{t'}, z_{t})$
         \STATE Observe new state $s_{t'+1}$ and reward $r_{t'}$
       \ENDFOR
       \STATE $t \leftarrow t + p$
   \ENDWHILE
   \STATE {\bfseries Output:} $(s_0, z_0, a_0, s_1, a_1, \dots,s_H, z_H, a_H, s_{H+1})$
\end{algorithmic}
\end{algorithm}
\end{minipage}
\noindent
\hspace{0.1cm}
\begin{minipage}[t]{0.375\textwidth}
\begin{algorithm}[H]
   \caption{HiPPO}
   \label{alg:HiPPO}
\begin{algorithmic}[1]
   \STATE {\bfseries Input:} skills $\pi_{\theta_l}(a|s,z)$, manager $\pi_{\theta_h}(z|s)$, horizon $H$, learning rate $\alpha$
   \WHILE{not done}
       \FOR{actor = 1, 2, ..., N}
           \STATE Obtain trajectory with $\texttt{HiPPO Rollout}$
           \STATE Estimate advantages $\hat{A}(a_{t'}, s_{t'}, z_{t})$ and $\hat{A}(z_{t}, s_{t})$
       \ENDFOR
       \STATE $\theta\leftarrow \theta + \alpha \nabla_{\theta} L^{CLIP}_{HiPPO}(\theta)$
   \ENDWHILE
\end{algorithmic}
\end{algorithm}
\centering
\end{minipage}
\vspace{-.25 cm}
\end{figure*}

\subsection{Hierarchical Proximal Policy Optimization}
Using an appropriate step size in policy space is critical for stable policy learning. Modifying the policy parameters in some directions may have a minimal impact on the distribution over actions, whereas small changes in other directions might change its behavior drastically and hurt training efficiency \citep{kakade2002natural}.
Trust region policy optimization (TRPO) uses a constraint on the KL-divergence between the old policy and the new policy to prevent this issue \citep{schulman2015trpo}. Unfortunately, hierarchical policies are generally represented by complex distributions without closed form expressions for the KL-divergence. 
Therefore, to improve the stability of our hierarchical policy gradient we turn towards Proximal Policy Optimization (PPO)~\citep{schulman2017ppo}. PPO is a more flexible and compute-efficient algorithm. In a nutshell, it replaces the KL-divergence constraint with a cost function that achieves the same trust region benefits, but only requires the computation of the likelihood. Letting $w_t(\theta) = \frac{\pi_\theta(a_t|s_t)}{\pi_{\theta_{old}}(a_t|s_t)}$, the PPO objective is:

$$L^{CLIP}(\theta) = \mathbb{E}_t\min\big\{w_t(\theta)A_t,~\texttt{clip}(w_t(\theta), 1-\epsilon, 1+\epsilon)A_t\big\}$$

We can adapt our approximated hierarchical policy gradient with the same approach by letting $w_{h, kp}(\theta) = \frac{\pi_{\theta_h}(z_{kp}|s_{kp})}{\pi_{\theta_{h, old}}(z_{kp}|s_{kp})}$ and $w_{l, t}(\theta) = \frac{\pi_{\theta_l}(a_t|s_t, z_{kp})}{\pi_{\theta_{l, old}}(a_t|s_t, z_{kp})}$, and using the super-index $\texttt{clip}$ to denote the clipped objective version, we obtain the new surrogate objective:
\begin{align*}
L^{CLIP}_{HiPPO}(\theta) = \mathbb{E}_{\tau}\Big[&\sum_{k=0}^{H/p} \min\big\{w_{h, kp}(\theta)A(s_{kp}, z_{kp}), w_{h, kp}^{\texttt{clip}}(\theta)A(s_{kp}, z_{kp})\big\} \\
+ &\sum_{t=0}^H \min\big\{w_{l, t}(\theta)A(s_t, a_t, z_{kp}), w_{l, t}^{\texttt{clip}}(\theta)A(s_t, a_t, z_{kp})\big\}\Big]
\end{align*}

We call this algorithm Hierarchical Proximal Policy Optimization (HiPPO). Next, we introduce a critical additions: a switching of the time-commitment between skills.

\subsection{Varying Time-commitment}
\label{sec:time}
Most hierarchical methods either consider a fixed time-commitment to the lower level skills \citep{florensa2017snn, frans2018mlsh}, or implement the complex options framework \citep{Precup2000TemporalLearning, bacon2017option}. In this work we propose an in-between, where the time-commitment to the skills is a random variable sampled from a fixed distribution $\texttt{Categorical}(T_{\min}, T_{\max})$ just before the manager takes a decision. This modification does not hinder final performance, and we show it improves zero-shot adaptation to a new task. This approach to sampling rollouts is detailed in Algorithm \ref{alg:rollout}. The full algorithm is detailed in Algorithm \ref{alg:HiPPO}.

\section{Experiments}
\begin{figure}[t]
    \centering
    \begin{subfigure}[t]{0.23\linewidth}
        \centering
        \includegraphics[trim=0cm 2.5cm 0cm 2.5cm, clip, width=0.95\linewidth, height =0.85\linewidth]{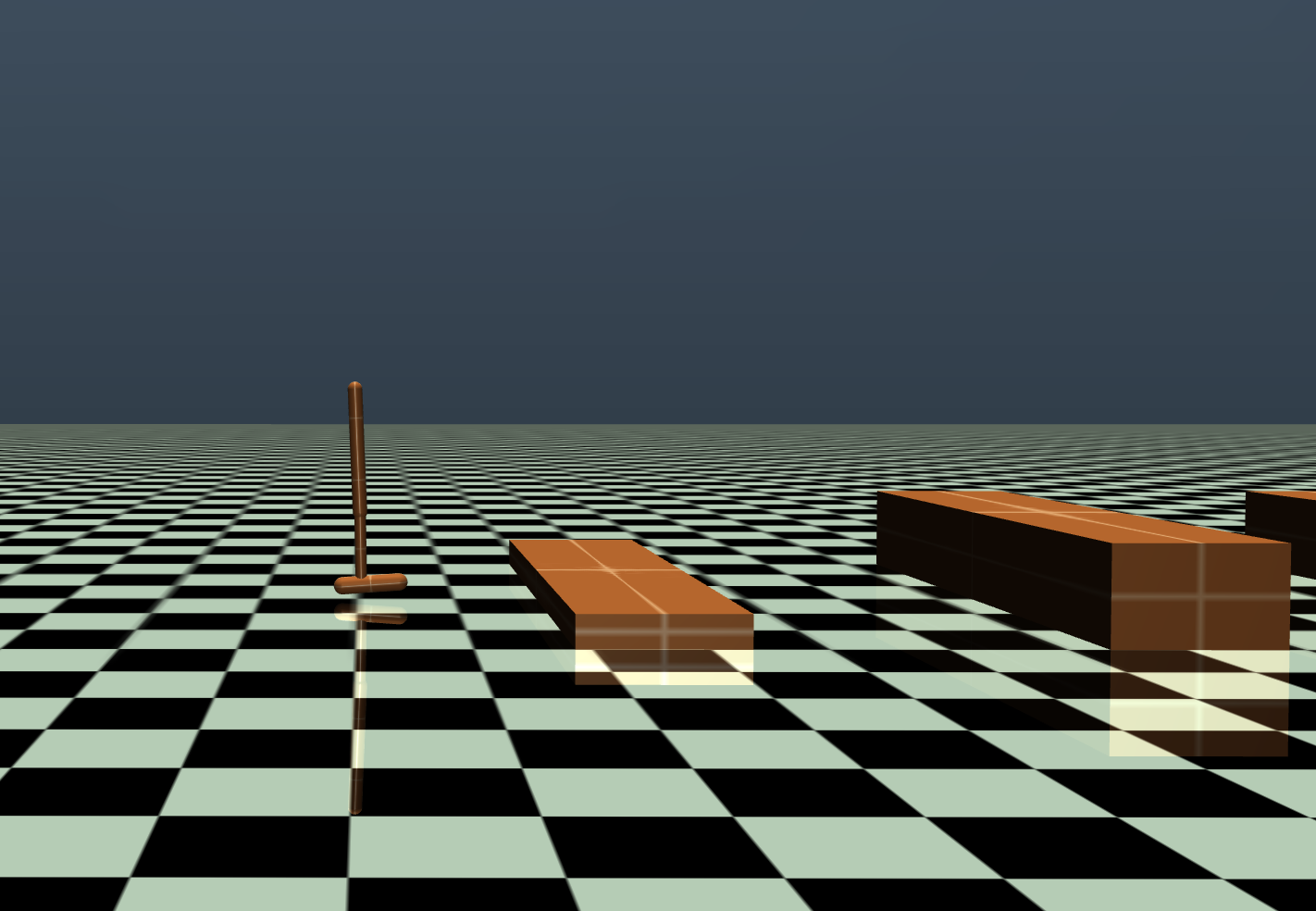}
        \caption{Block Hopper}
        \label{fig:hopper}
    \end{subfigure}
    \begin{subfigure}[t]{0.23\linewidth}
        \centering
        \includegraphics[trim=0cm 5cm 0cm 5cm, clip, width=0.95\linewidth, height =0.85\linewidth]{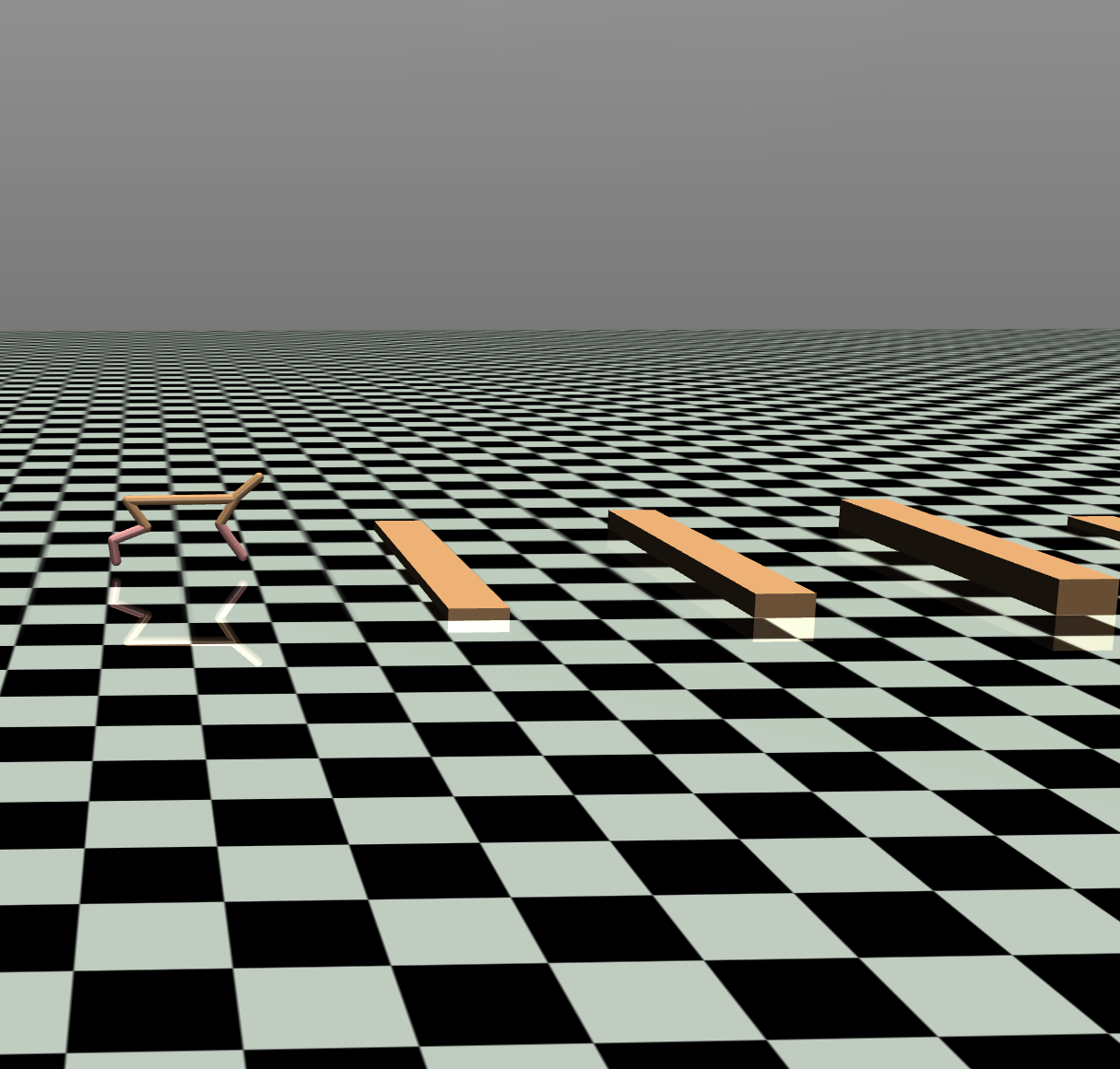}
        \caption{Block Half Cheetah}
        \label{fig:halfcheetah}
    \end{subfigure}
    \begin{subfigure}[t]{0.23\linewidth} 
        \centering
        \includegraphics[trim=0cm 0cm 0cm 0cm, clip, width=0.95\linewidth, height =0.85\linewidth]{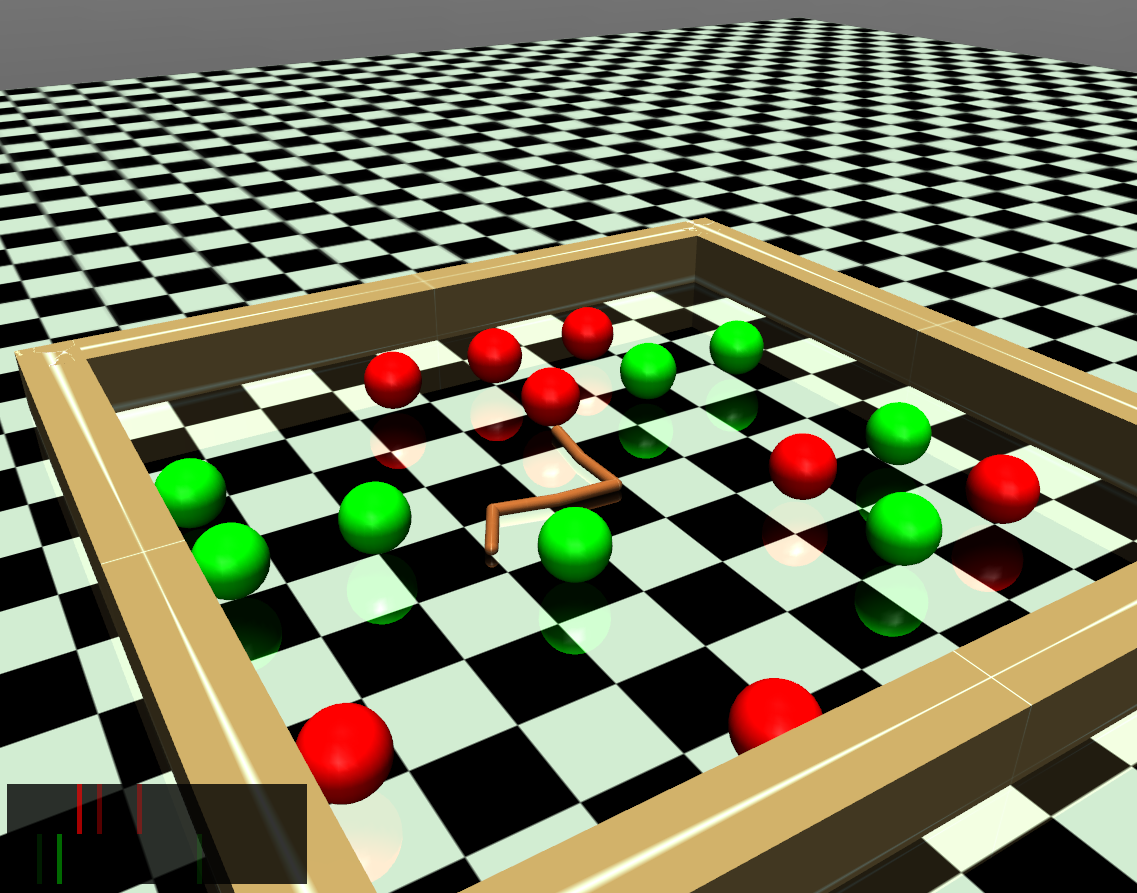}
        \caption{Snake Gather} 
        \label{fig:snakegather}
    \end{subfigure}%
    \begin{subfigure}[t]{0.23\linewidth}
        \centering
        \includegraphics[trim=0cm 0cm 0cm 0cm, clip, width=0.95\linewidth, height =0.85\linewidth]{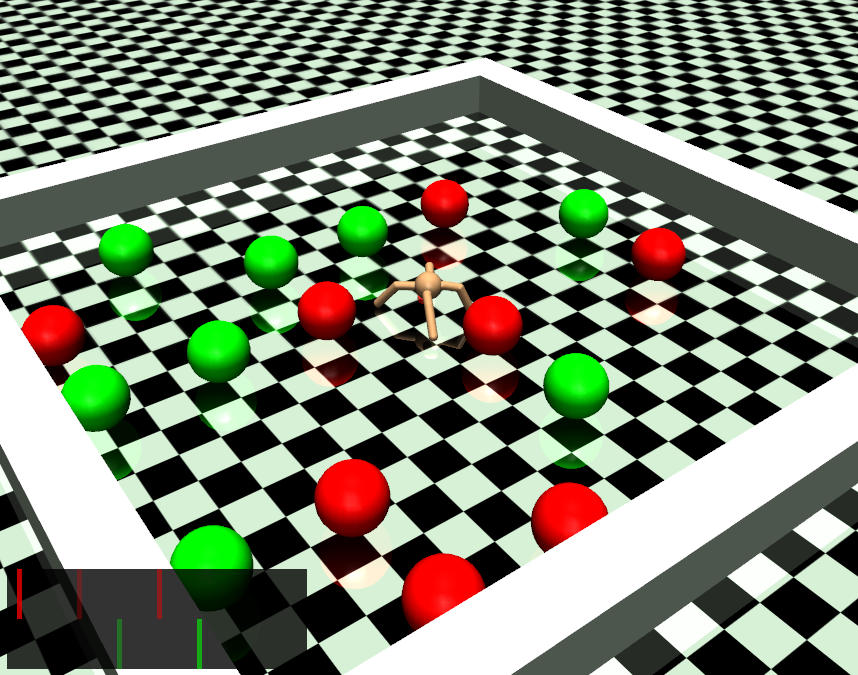}
        \caption{Ant Gather} 
        \label{fig:antgather}
    \end{subfigure}%
    \vspace{-0.2 cm}
    \caption{Environments used to evaluate the performance of our method. Every episode has a different configuration: wall heights for (a)-(b), ball positions for (c)-(d)}
    \label{fig:envs}
\end{figure}

We designed our experiments to answer the following questions: 1) How does HiPPO compare against a flat policy when learning from scratch? 2) Does it lead to policies more robust to environment changes? 3) How well does it adapt already learned skills? and 4) Does our skill diversity assumption hold in practice?

\subsection{Tasks}
\label{sec:tasks}
We evaluate our approach on a variety of robotic locomotion and navigation tasks. The Block environments, depicted in Fig.~\ref{fig:hopper}-\ref{fig:halfcheetah}, have walls of random heights at regular intervals, and the objective is to learn a gait for the Hopper and Half-Cheetah robots to jump over them. The agents observe the height of the wall ahead and their proprioceptive information (joint positions and velocities), receiving a reward of +1 for each wall cleared.
The Gather environments, described by \citet{duan2016benchmarking}, require agents to collect apples (green balls, +1 reward) while avoiding bombs (red balls, -1 reward). The only available perception beyond proprioception is through a LIDAR-type sensor indicating at what distance are the objects in different directions, and their type, as depicted in the bottom left corner of Fig.~\ref{fig:snakegather}-\ref{fig:antgather}. This is challenging hierarchical task with sparse rewards that requires simultaneously learning perception, locomotion, and higher-level planning capabilities. We use the Snake and Ant robots in Gather. Details for all robotic agents are provided in Appendix \ref{sec:robot_description}.

\subsection{Learning from Scratch and Time-Commitment}
In this section, we study the benefit of using our HiPPO algorithm instead of standard PPO on a flat policy~\citep{schulman2017ppo}. The results, reported in Figure~\ref{fig:scratch_train}, demonstrate that training from scratch with HiPPO leads to faster learning and better performance than flat PPO. Furthermore, we show that the benefit of HiPPO does not just come from having temporally correlated exploration: PPO with action repeat converges at a lower performance than our method. HiPPO leverages the time-commitment more efficiently, as suggested by the poor performance of the ablation where we set $p=1$, when the manager takes an action every environment step as well.
Finally, Figure~\ref{fig:effcect_baseline} shows the effectiveness of using the presented skill-dependent baseline. 


\begin{figure}
    \centering
    \begin{subfigure}[t]{0.85\linewidth}
        \centering
        \includegraphics[trim=0cm 0cm 0cm 0cm, clip, width=\linewidth]{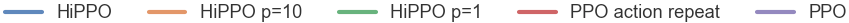}
        \label{fig:antgather_train}
    \end{subfigure} \\
    \vspace{-0.3cm}
    ~
    \begin{subfigure}[t]{0.23\linewidth}
        \centering
        \includegraphics[trim=0cm 0cm 0cm 0cm, clip, width=\linewidth]{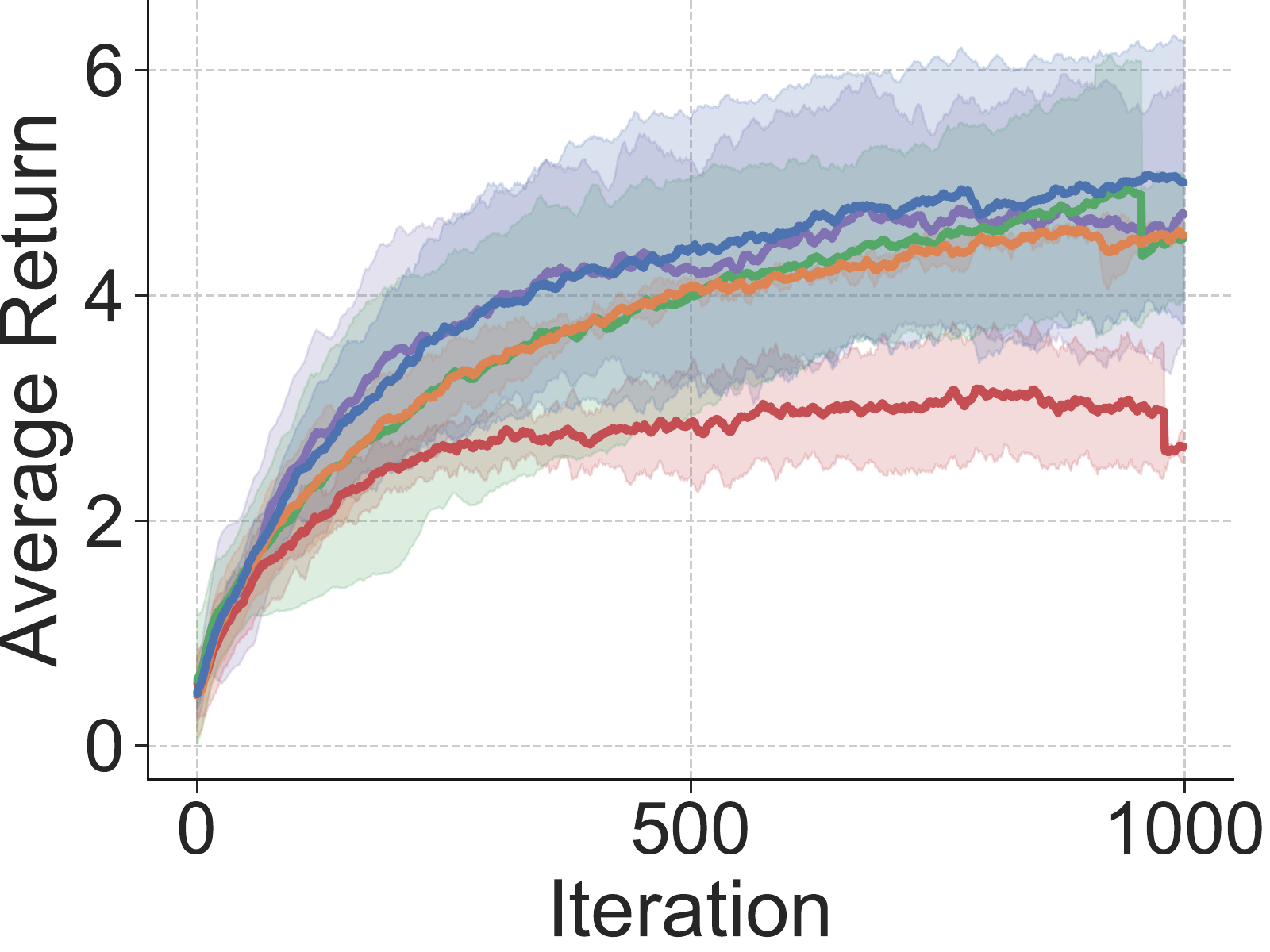}
        \caption{Block Hopper}
        \label{fig:hopper_train}
    \end{subfigure}
    ~
    \begin{subfigure}[t]{0.23\linewidth}
        \centering
        \includegraphics[trim=0cm 0cm 0cm 0cm, clip, width=\linewidth]{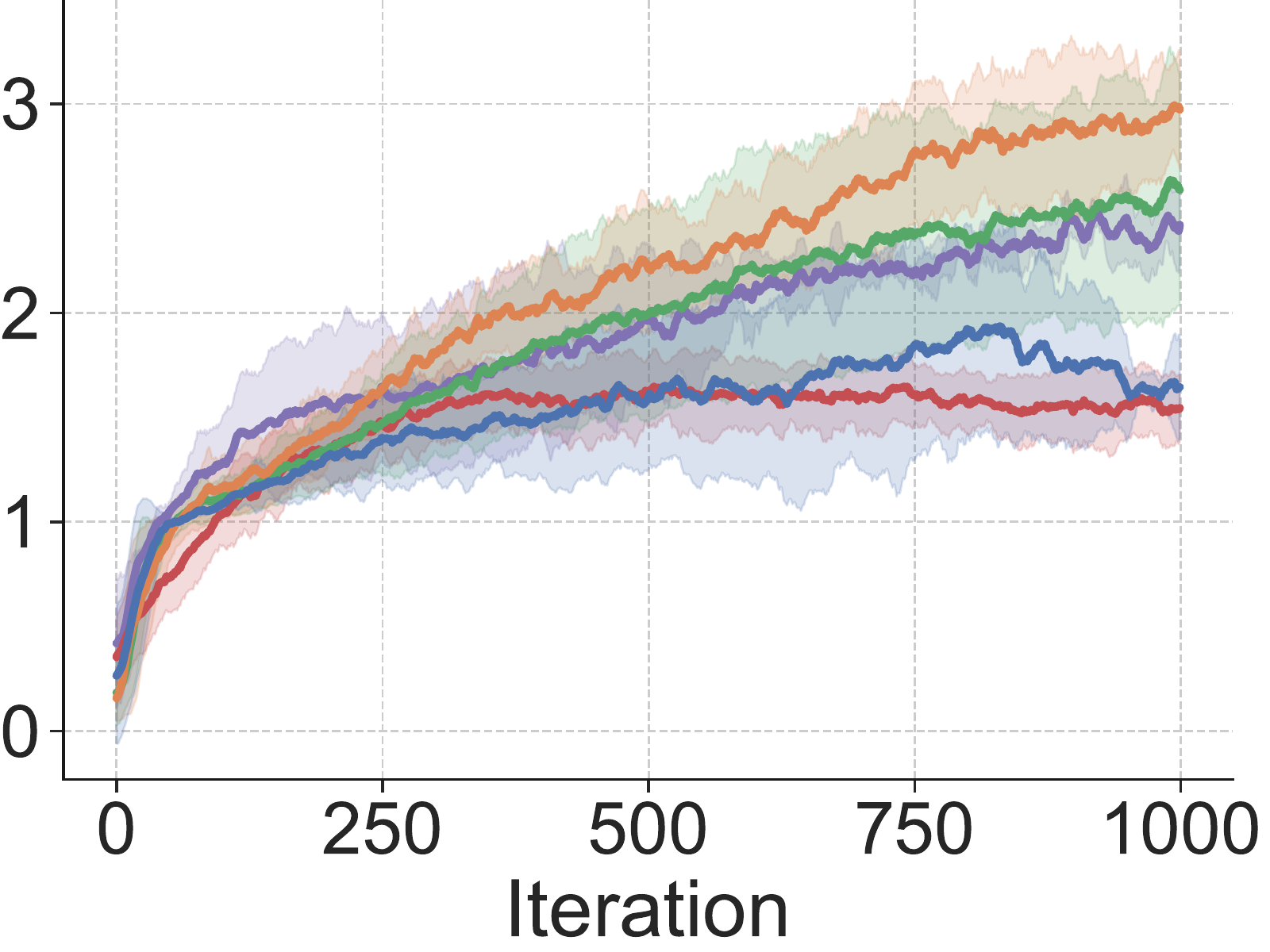}
        \caption{Block Half Cheetah}
        \label{fig:halfcheetah_train}
    \end{subfigure}
    ~
    \begin{subfigure}[t]{0.23\linewidth}
        \centering
        \includegraphics[trim=0cm 0cm 0cm 0cm, clip, width=\linewidth]{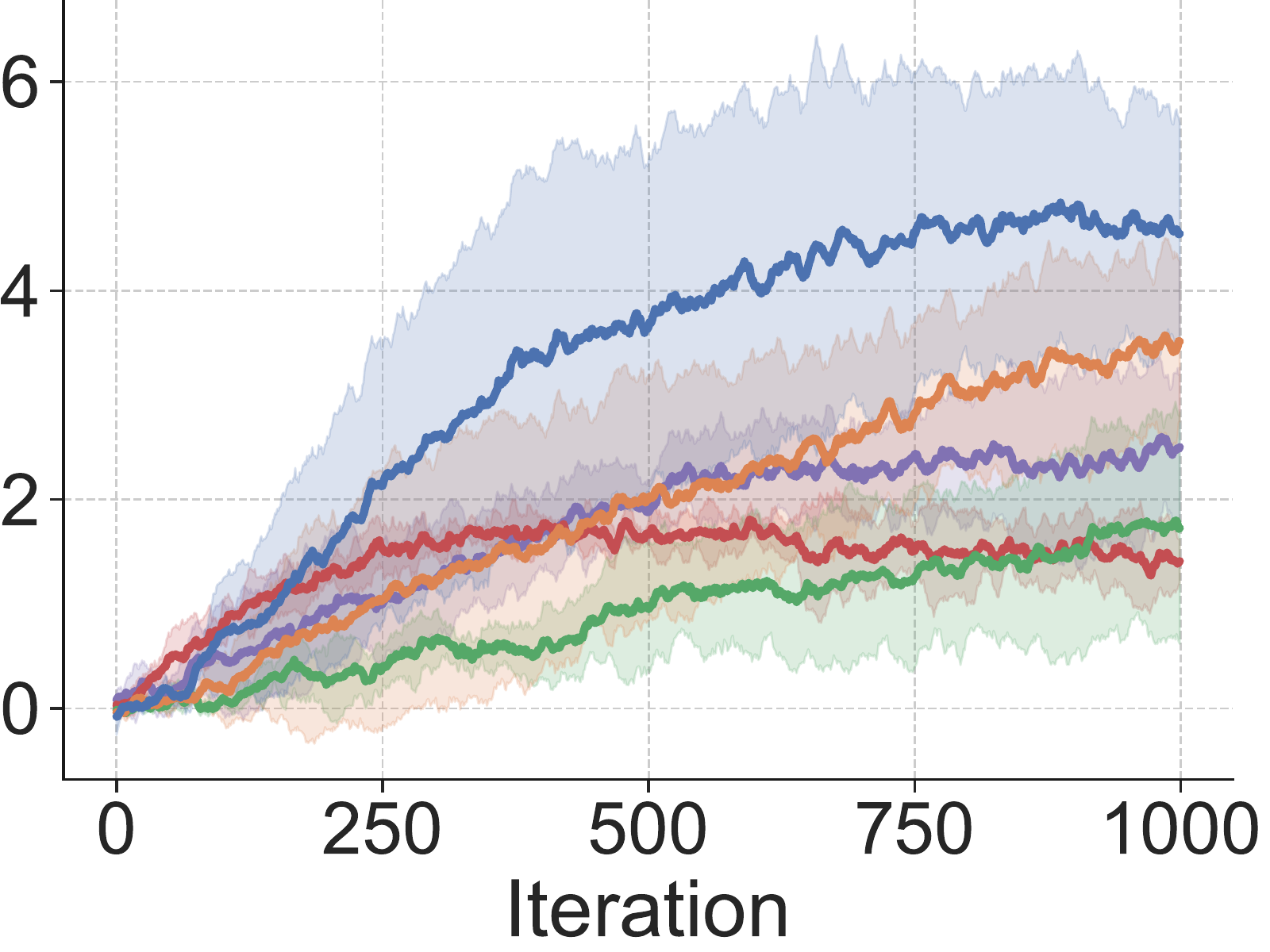}
        \caption{Snake Gather}
        \label{fig:snake3dgather_train}
    \end{subfigure}
    ~
    \begin{subfigure}[t]{0.23\linewidth}
        \centering
        \includegraphics[trim=0cm 0cm 0cm 0cm, clip, width=\linewidth]{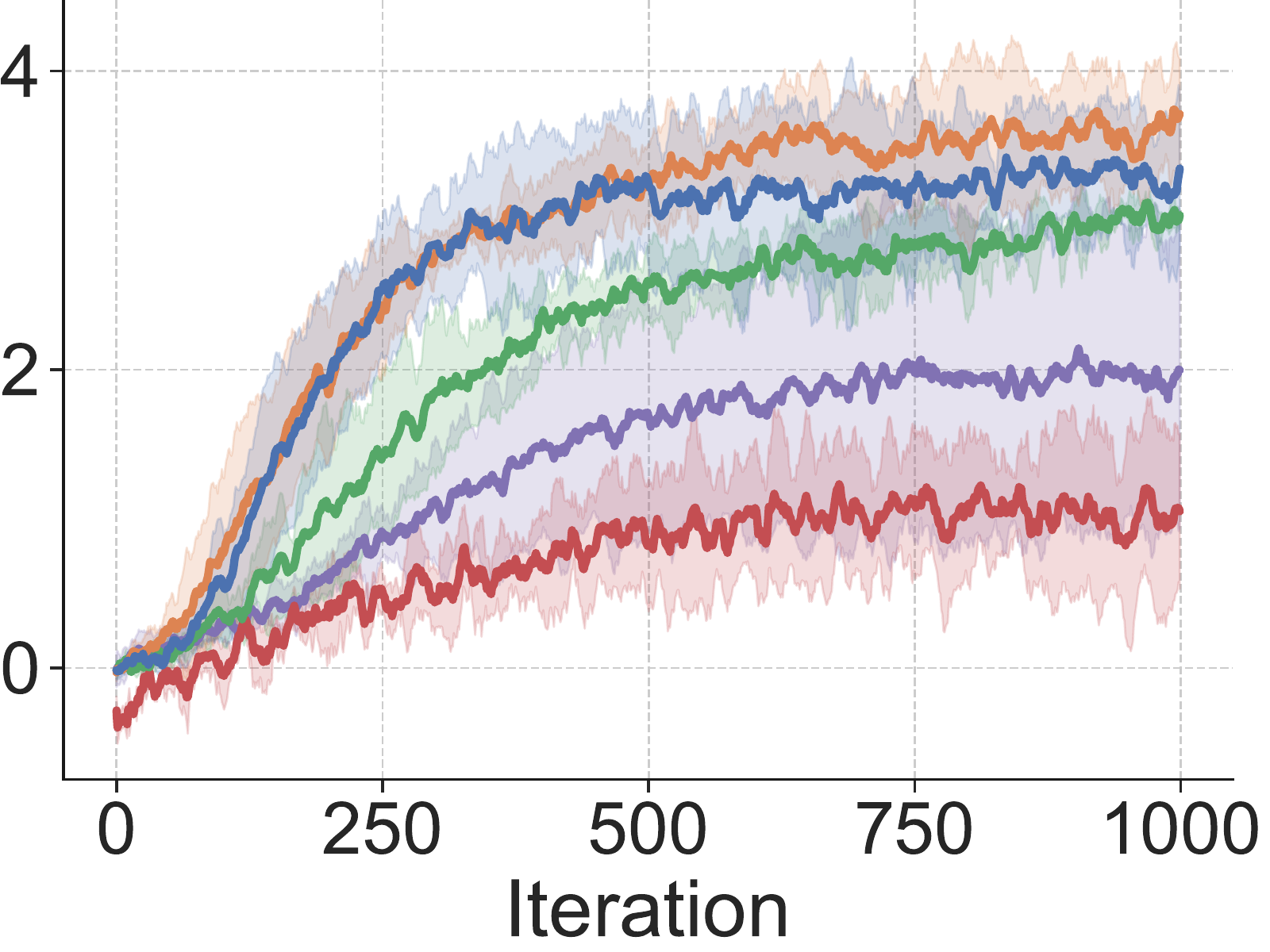}
        \caption{Ant Gather}
        \label{fig:antgather_train}
    \end{subfigure}
    \vspace{-0.2 cm}
    \caption{Analysis of different time-commitment strategies on learning from scratch.}
    \label{fig:scratch_train}
    \vspace{-0.1cm}
\end{figure}
\begin{figure}
    \centering
    \vspace{-0.25 cm}
    \begin{subfigure}[t]{0.47\linewidth}
        \centering
        \includegraphics[trim=0cm 0cm 0cm 0cm, clip, width=\linewidth]{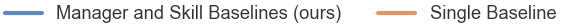}
        \label{fig:antgather_train}
    \end{subfigure}\\
    \vspace{-0.4 cm}
    ~
    \begin{subfigure}[t]{0.23\linewidth}
        \centering
        \includegraphics[trim=0cm 0cm 0cm 0cm, clip, width=\linewidth]{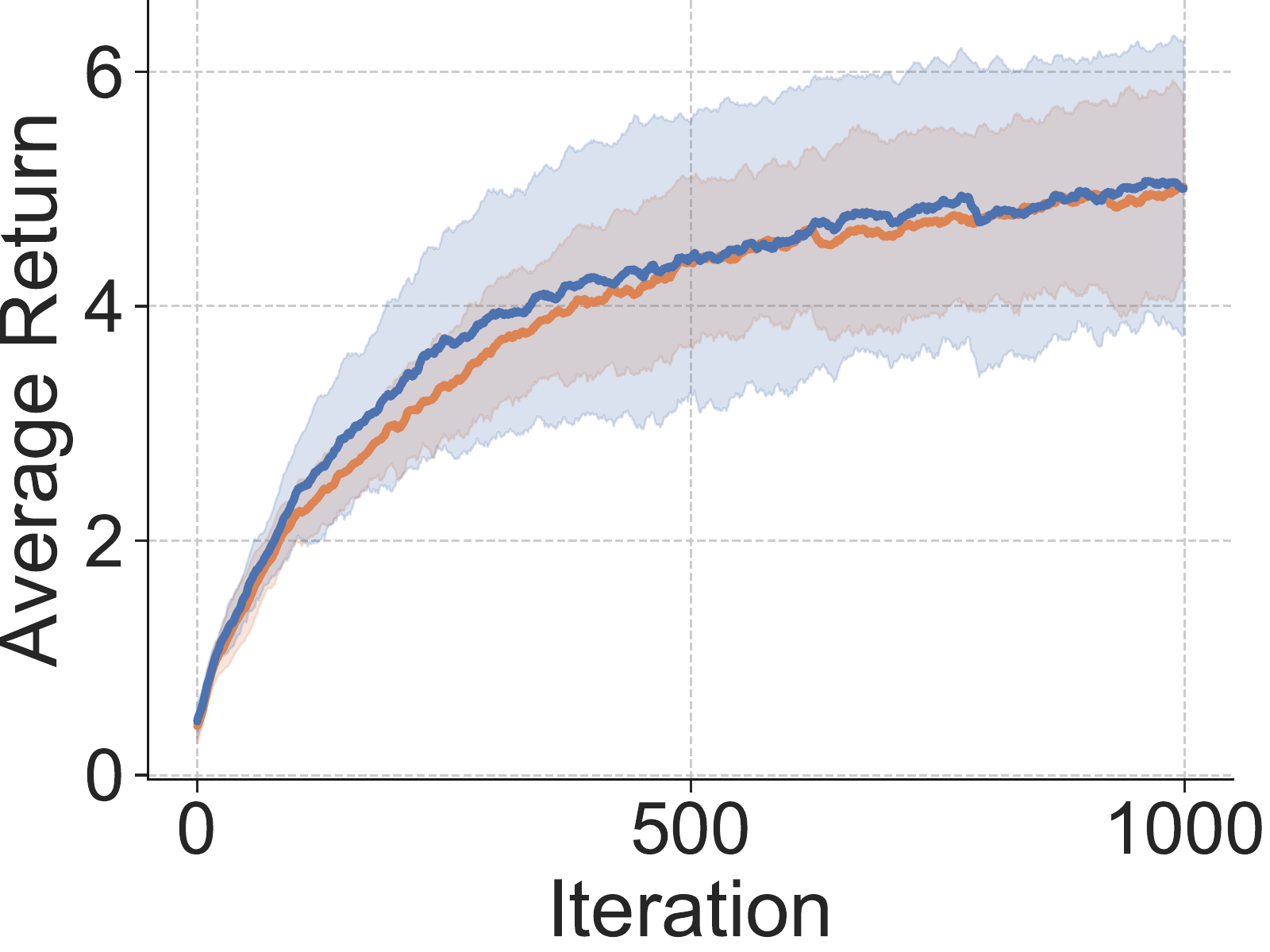}
        \caption{Block Hopper}
        \label{fig:antgather_train}
    \end{subfigure}
    ~
    \begin{subfigure}[t]{0.23\linewidth}
        \centering
        \includegraphics[trim=0cm 0cm 0cm 0cm, clip, width=\linewidth]{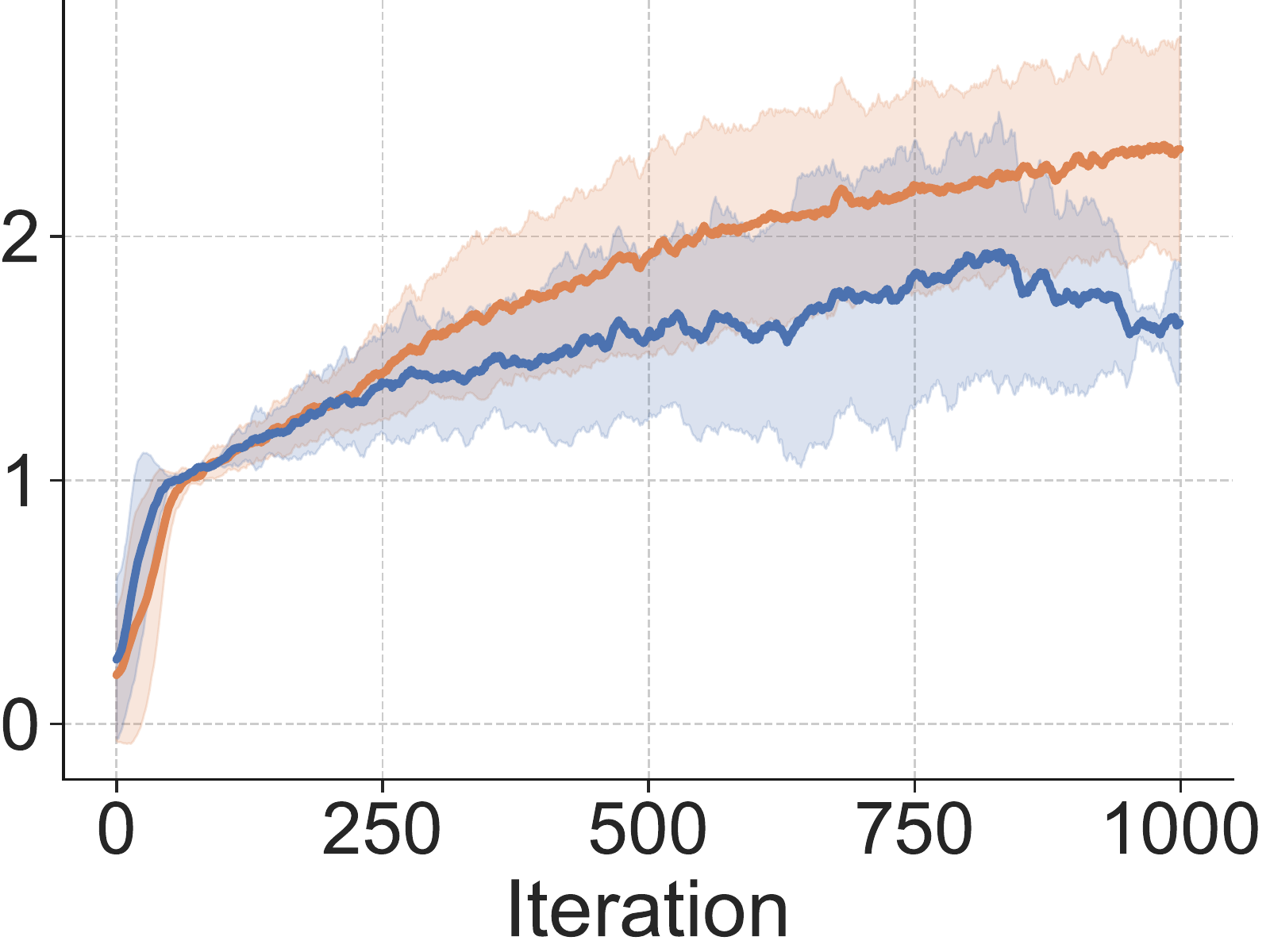}
        \caption{Block Half Cheetah}
        \label{fig:antgather_train}
    \end{subfigure}
    ~
    \begin{subfigure}[t]{0.23\linewidth}
        \centering
        \includegraphics[trim=0cm 0cm 0cm 0cm, clip, width=\linewidth]{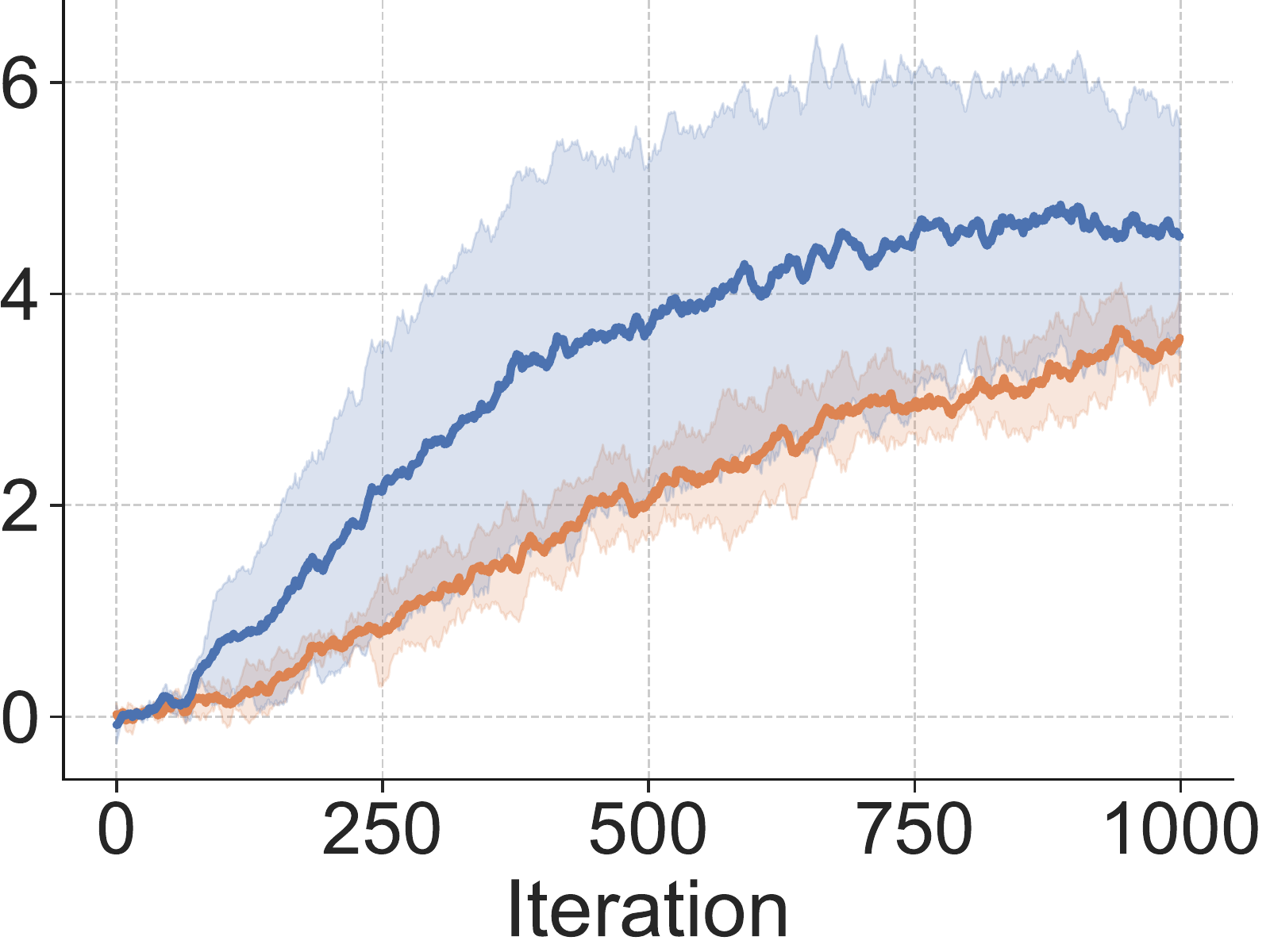}
        \caption{Snake Gather}
        \label{fig:snake3dgather_train}
    \end{subfigure}
    ~
    \begin{subfigure}[t]{0.23\linewidth}
        \centering
        \includegraphics[trim=0cm 0cm 0cm 0cmm, clip, width=\linewidth]{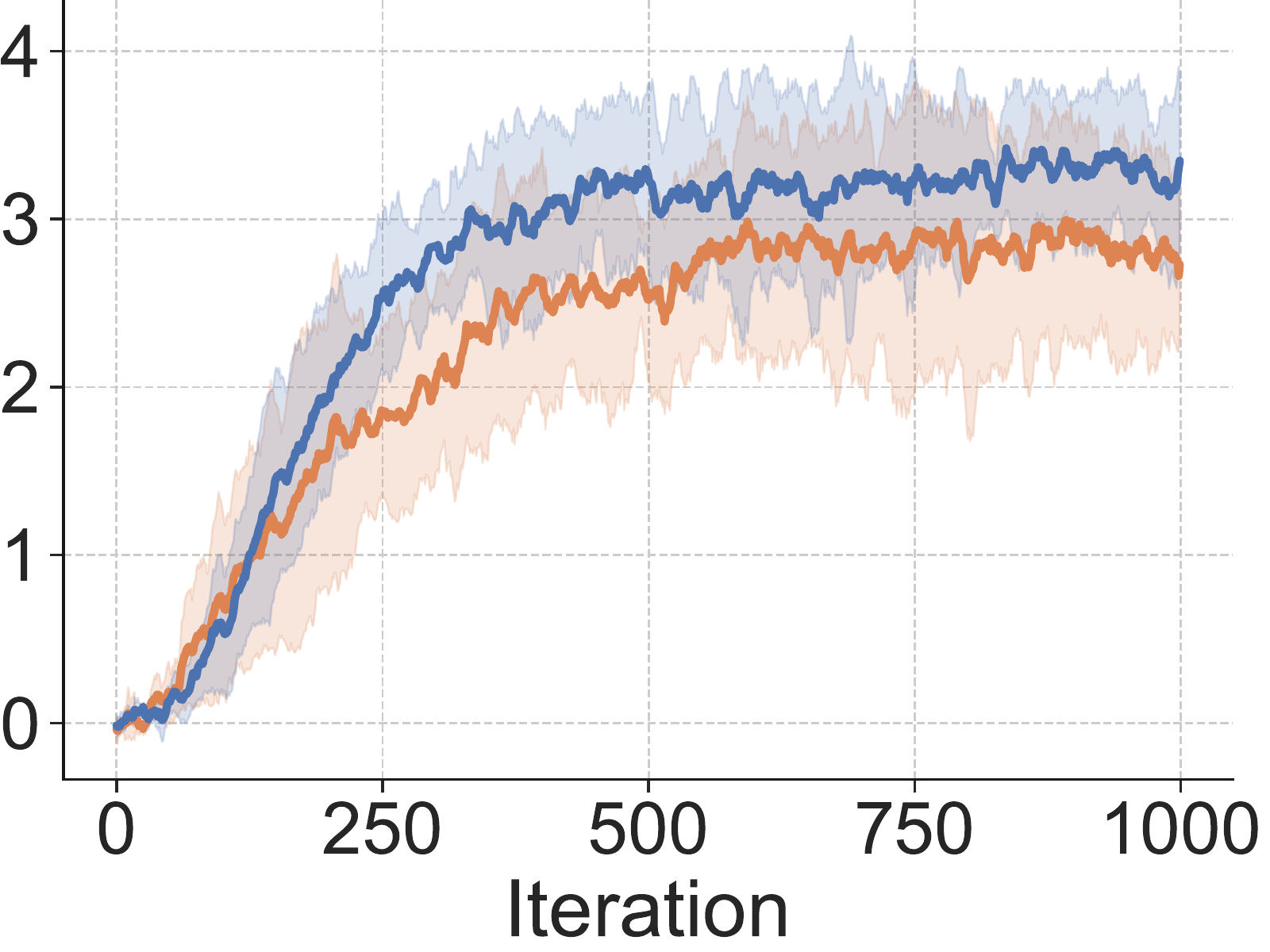}
        \caption{Ant Gather}
        \label{fig:antgather_train}
    \end{subfigure}
    ~
    \vspace{-0.2 cm}
    \caption{Using a skill-conditioned baseline, as defined in Section \ref{sec:baselines}, generally improves performance of HiPPO when learning from scratch. }
    \label{fig:effcect_baseline}
    \vspace{-0.25 cm}
\end{figure}

\begin{figure}
    \centering
    \begin{subfigure}[t]{0.7\linewidth}
        \centering
        \includegraphics[trim=0cm 0cm 0cm 0cm, clip, width=\linewidth]{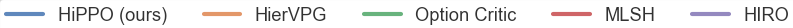}
        \label{fig:antgather_train}
    \end{subfigure} \\
    \vspace{-0.4 cm}
    ~
    \begin{subfigure}[t]{0.23\linewidth}
        \centering
        \includegraphics[trim=0cm 0cm 0cm 0cm, clip, width=\linewidth]{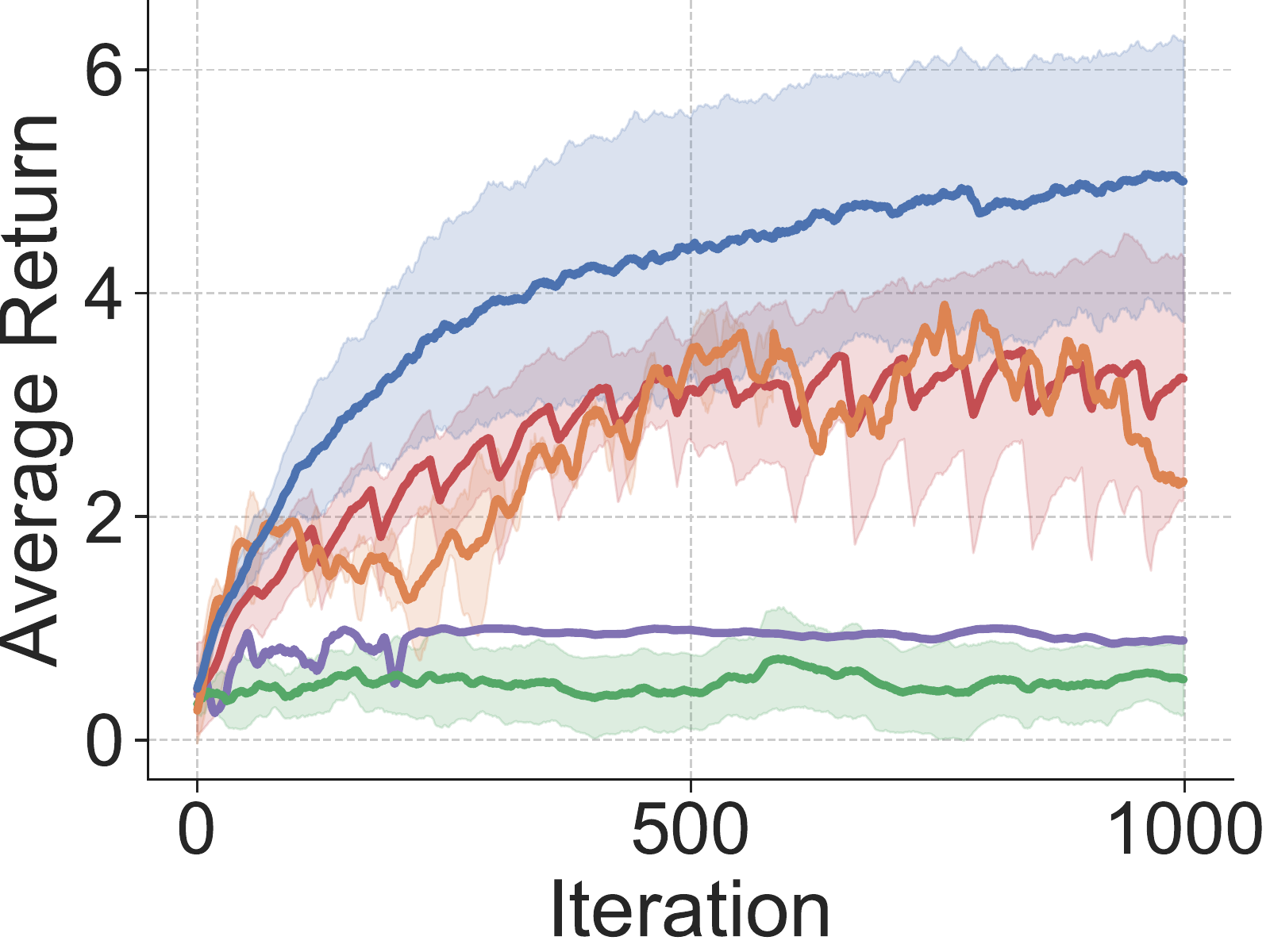}
        \caption{Block Hopper}
        \label{fig:hopper_train}
    \end{subfigure}
    ~
    \begin{subfigure}[t]{0.23\linewidth}
        \centering
        \includegraphics[trim=0cm 0cm 0cm 0cm, clip, width=\linewidth]{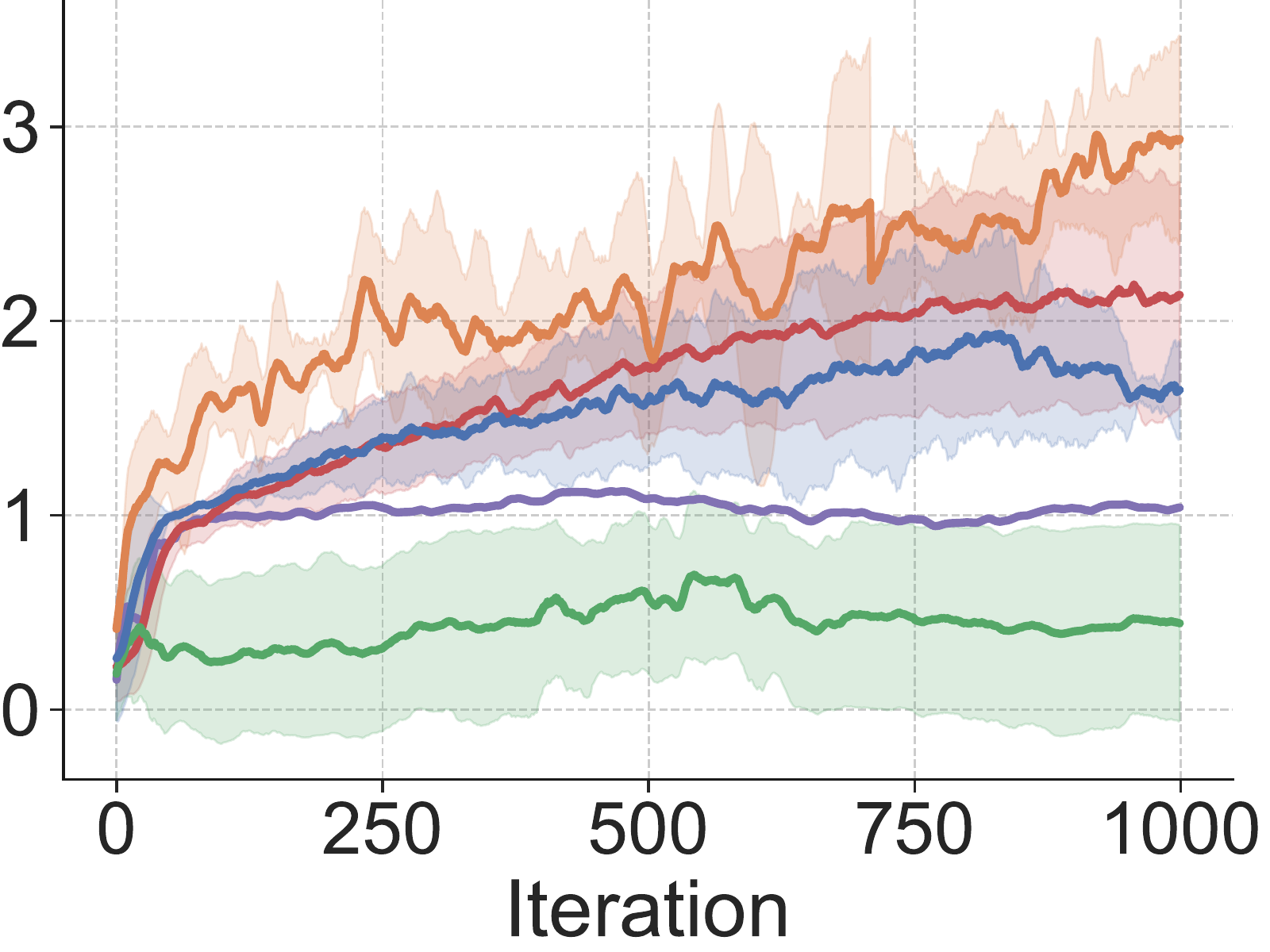}
        \caption{Block Half Cheetah}
        \label{fig:halfcheetah_train}
    \end{subfigure}
    ~
    \begin{subfigure}[t]{0.23\linewidth}
        \centering
        \includegraphics[trim=0cm 0cm 0cm 0cm, clip, width=\linewidth]{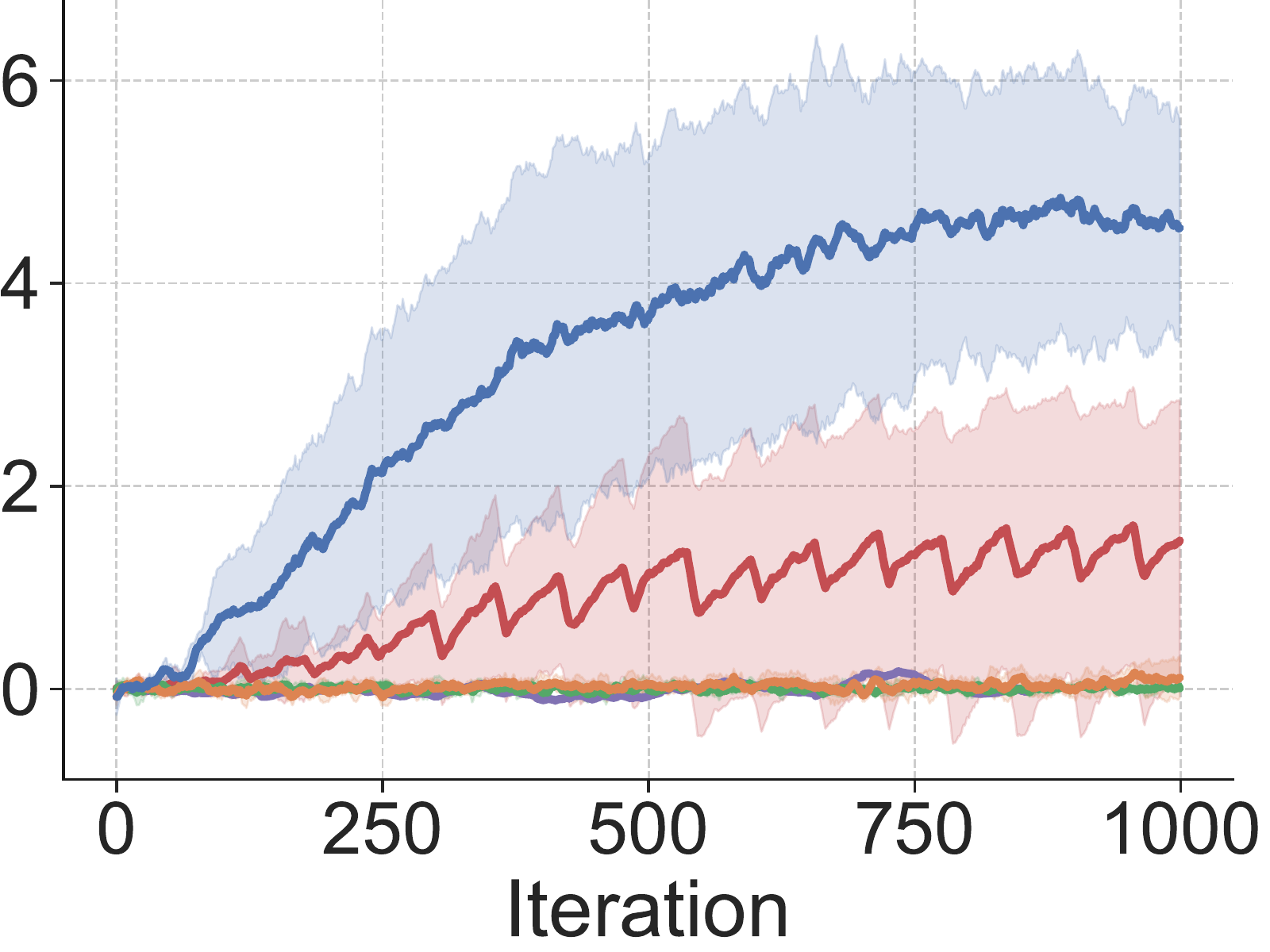}
        \caption{Snake Gather}
        \label{fig:snake3dgather_train}
    \end{subfigure}
    ~
    \begin{subfigure}[t]{0.23\linewidth}
        \centering
        \includegraphics[trim=0cm 0cm 0cm 0cm, clip, width=\linewidth]{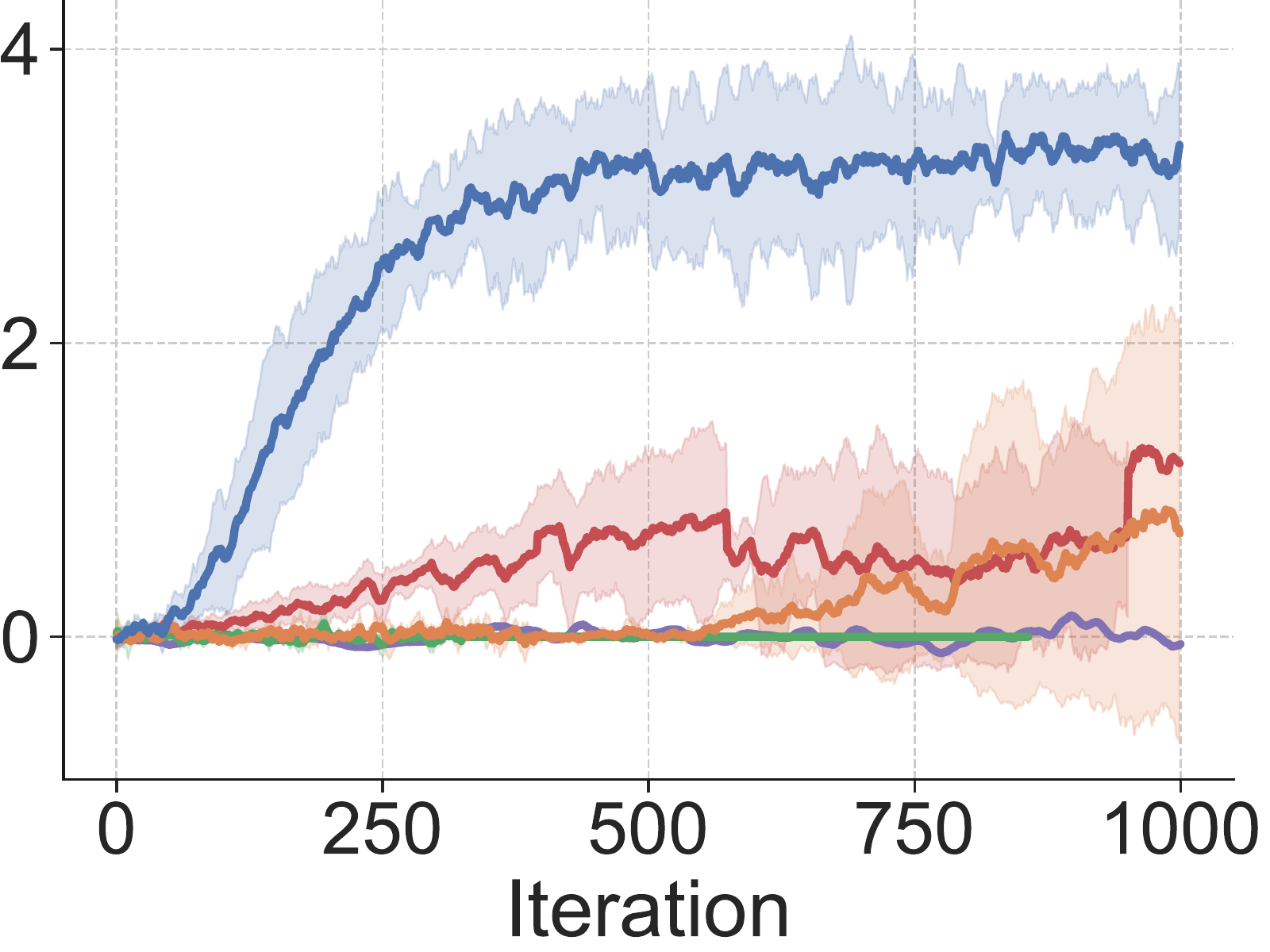}
        \caption{Ant Gather}
        \label{fig:antgather_train}
    \end{subfigure}
   
    \vspace{-0.2cm}
    \caption{Comparison of HiPPO and HierVPG to prior hierarchical methods on learning from scratch.}
    \label{fig:method_comparison}
    \vspace{-0.45cm}
\end{figure}

\subsection{Comparison to Other Methods}
We compare HiPPO to current state-of-the-art hierarchical methods. First, we evaluate HIRO \citep{nachum2018hiro}, an off-policy RL method based on training a goal-reaching lower level policy. Fig.~\ref{fig:method_comparison} shows that HIRO achieves poor performance on our tasks. As further detailed in Appendix \ref{sec:HIRO_sensitivity}, this algorithm is sensitive to access to ground-truth information, like the exact $(x,y)$ position of the robot in Gather. In contrast, our method is able to perform well directly from the raw sensory inputs described in Section \ref{sec:tasks}. We evaluate Option-Critic \citep{bacon2017option}, a variant of the options framework \citep{sutton1999temporal} that can be used for continuous action-spaces. It fails to learn, and we hypothesize that their algorithm provides less time-correlated exploration and learns less diverse skills. We also compare against MLSH \citep{frans2018mlsh}, which repeatedly samples new environment configurations to learn primitive skills. We take these hyperparameters from their Ant Twowalk experiment: resetting the environment configuration every 60 iterations, a warmup period of 20 during which only the manager is trained, and a joint training period of 40 during which both manager and skills are trained. Our results show that such a training scheme does not provide any benefits. Finally, we provide a comparison to a direct application of our Hierarchical Vanilla Policy Gradient (HierVPG) algorithm, and we see that the algorithm is unstable without PPO's trust-region-like technique. 

\subsection{Robustness to Dynamics Perturbations}
We investigate the robustness of HiPPO to changes in the dynamics of the environment. We perform several modifications to the base Snake Gather and Ant Gather environments. One at a time, we change the body mass, dampening of the joints, body inertia, and friction characteristics of both robots. The results, presented in Table~\ref{tab:zeroshot}, show that HiPPO with randomized period $\texttt{Categorical}([T_{\min}, T_{\max}])$  is able to better handle these dynamics changes. In terms of the drop in policy performance between the training environment and test environment, it outperforms HiPPO with fixed period on 6 out of 8 related tasks. These results suggest that the randomized period exposes the policy to a wide range of scenarios, which makes it easier to adapt when the environment changes.

\begin{table*}[!htbp]
  \begin{center}
    \begin{tabular}{llr|rrrr}\toprule
      \textbf{Gather} & \textbf{Algorithm} & \textbf{Initial} & \textbf{Mass} & \textbf{Dampening} & \textbf{Inertia} & \textbf{Friction}\\
      \hline
      \multirow{3}{*}{Snake} & Flat PPO         & 2.72          & 3.16 (+16\%)          & 2.75 (+1\%)           & 2.11 (-22\%)          & 2.75 (+1\%)\\
                             & HiPPO, $p=10$    & 4.38          & 3.28 (-25\%)          & 3.27 (-25\%)          & 3.03 (-31\%)          & 3.27 (-25\%)\\
                             & HiPPO random $p$ & 5.11 & \textbf{4.09} (-20\%) & \textbf{4.03} (-21\%) & \textbf{3.21} (-37\%) & \textbf{4.03} (-21\%)\\
      \hline
      \multirow{3}{*}{Ant} & Flat PPO         & 2.25          & 2.53 (+12\%)         & 2.13 (-5\%)           & 2.36 (+5\%)          & 1.96 (-13\%)\\
                          & HiPPO, $p=10$    & 3.84 & 3.31 (-14\%)         & \textbf{3.37} (-12\%) & 2.88 (-25\%)         & \textbf{3.07} (-20\%)\\
                          & HiPPO random $p$ & 3.22          & \textbf{3.37} (+5\%) & 2.57 (-20\%)          & \textbf{3.36} (+4\%) & 2.84 (-12\%) \\
    \bottomrule
    \end{tabular}
  \end{center}
  \vspace{-0.2cm}
\caption{Zero-shot transfer performance. The final return in the initial environment is shown, as well as the average return over 25 rollouts in each new modified environment.}
\label{tab:zeroshot}
\end{table*}


\subsection{Adaptation of Pre-Trained Skills}
For the Block task, we use DIAYN \citep{Eysenbach2019diayn} to train 6 differentiated subpolicies in an environment without any walls. Here, we see if these diverse skills can improve performance on a downstream task that's out of the training distribution. For Gather, we take 6 pretrained subpolicies encoded by a Stochastic Neural Network \citep{tang2013sfnn} that was trained in a diversity-promoting environment \citep{florensa2017snn}. 
We fine-tune them with HiPPO on the Gather environment, but with an extra penalty on the velocity of the Center of Mass. This can be understood as a preference for cautious behavior. This requires adjustment of the sub-policies, which were trained with a proxy reward encouraging them to move as far as possible (and hence quickly).
Fig. \ref{fig:finetuning} shows that using HiPPO to simultaneously train a manager and fine-tune the skills achieves higher final performance than fixing the sub-policies and only training a manager with PPO. The two initially learn at the same rate, but HiPPO's ability to adjust to the new dynamics allows it to reach a higher final performance. Fig. \ref{fig:finetuning} also shows that HiPPO can fine-tune the same given skills better than Option-Critic \citep{bacon2017option}, MLSH \citep{frans2018mlsh}, and HIRO \citep{nachum2018hiro}. 

\begin{figure}
    \centering
    \begin{subfigure}[t]{\linewidth}
        \centering
        \includegraphics[trim=0cm 0cm 0cm 0cm, clip, width=\linewidth]{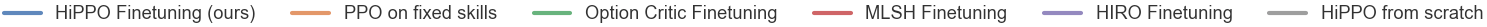}
    \end{subfigure}\\
    ~
    \begin{subfigure}[t]{0.23\linewidth}
        \centering
        \includegraphics[trim=0cm 0cm 0cm 0cm,, clip, width=\linewidth]{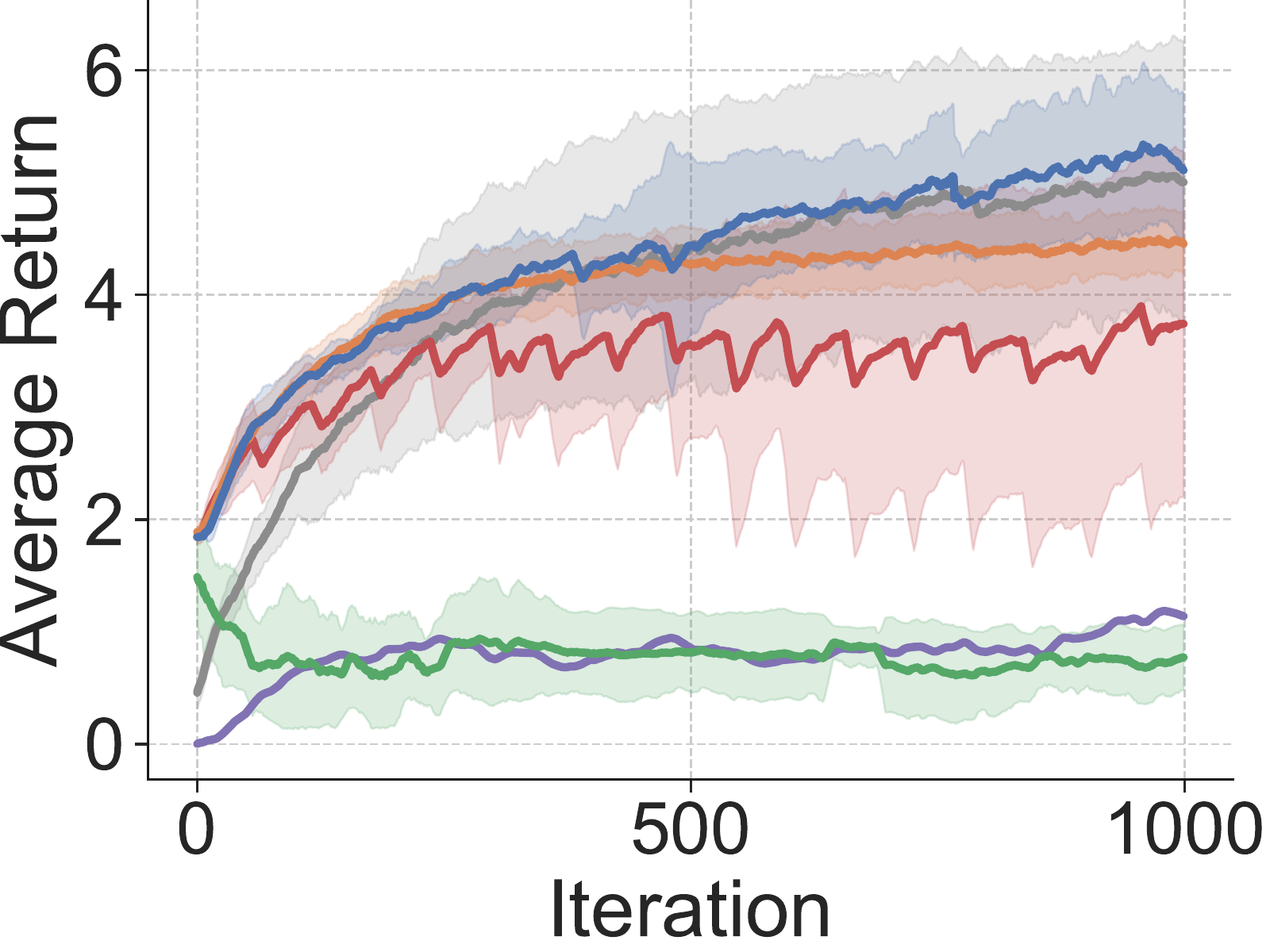}
        \caption{Block Hopper}
        \label{fig:snake3dgather_finetune}
    \end{subfigure}
    ~
    \begin{subfigure}[t]{0.23\linewidth}
        \centering
        \includegraphics[trim=0cm 0cm 0cm 0cm,, clip, width=\linewidth]{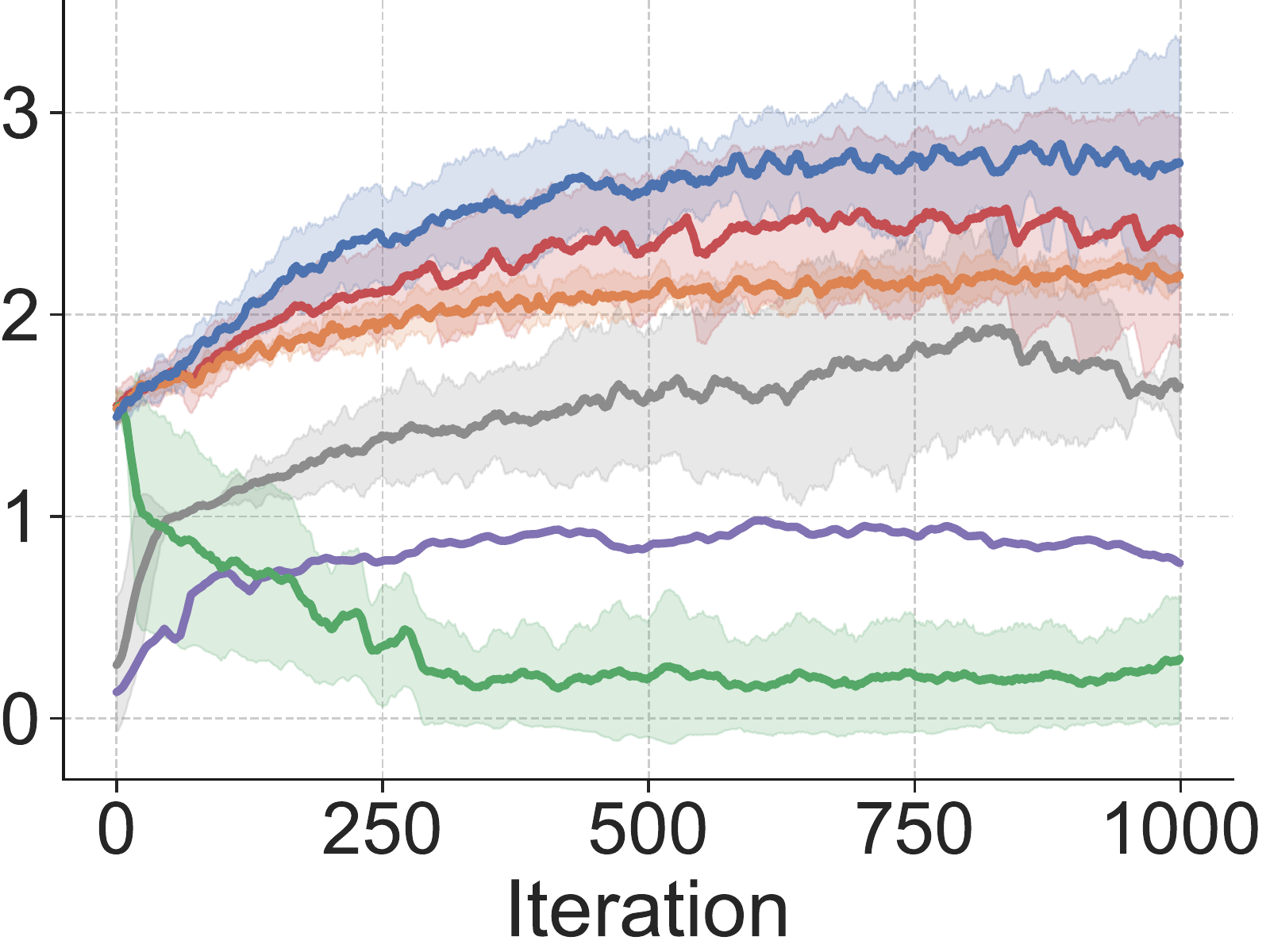}
        \caption{Block Half Cheetah}
        \label{fig:antgather_finetune}
    \end{subfigure}
    ~
    \begin{subfigure}[t]{0.23\linewidth}
        \centering
        \includegraphics[trim=0cm 0cm 0cm 0cm,, clip, width=\linewidth]{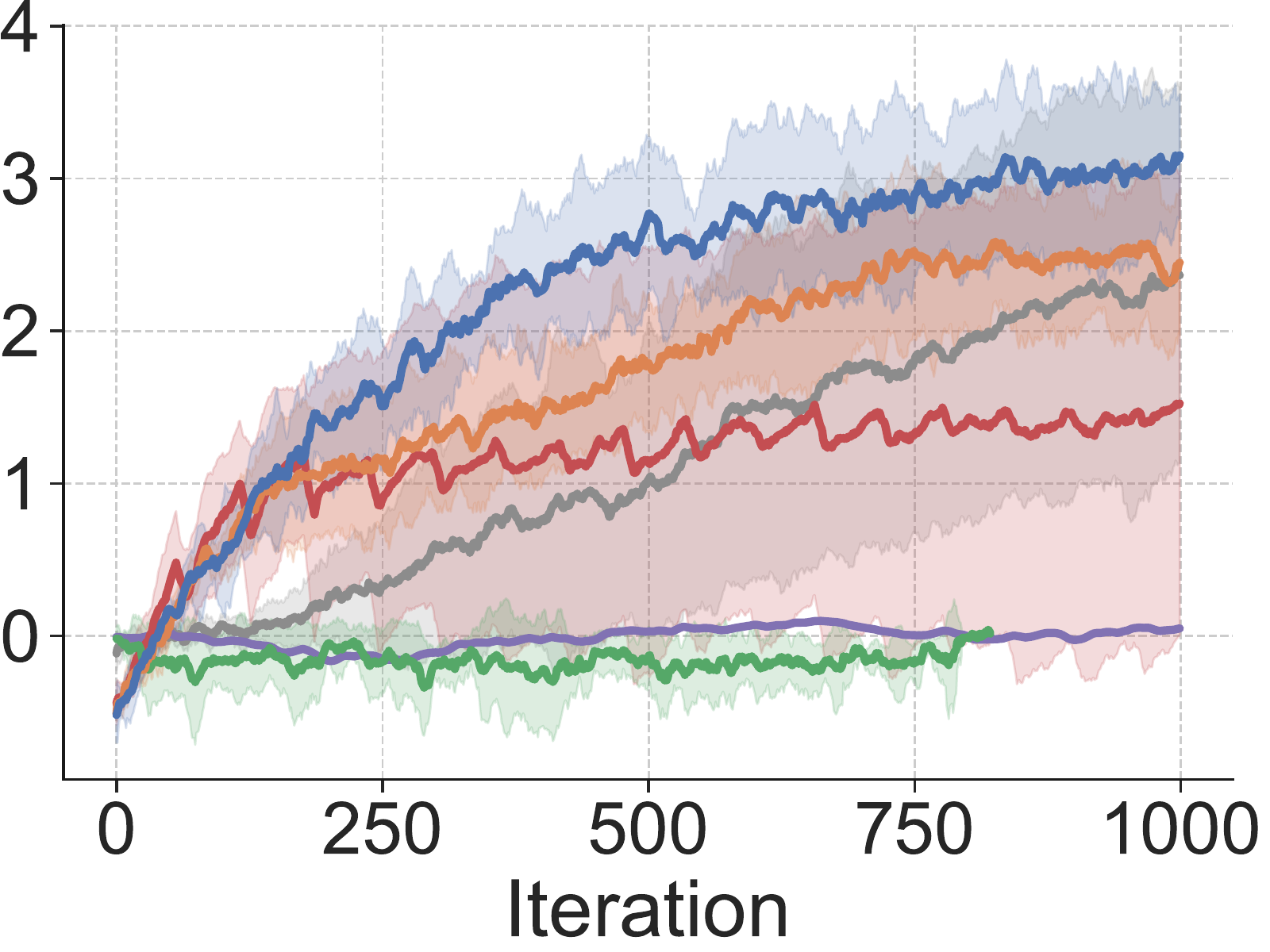}
        \caption{Snake Gather}
        \label{fig:snake3dgather_finetune}
    \end{subfigure}
    ~
    \begin{subfigure}[t]{0.23\linewidth}
        \centering
        \includegraphics[trim=0cm 0cm 0cm 0cm,, clip, width=\linewidth]{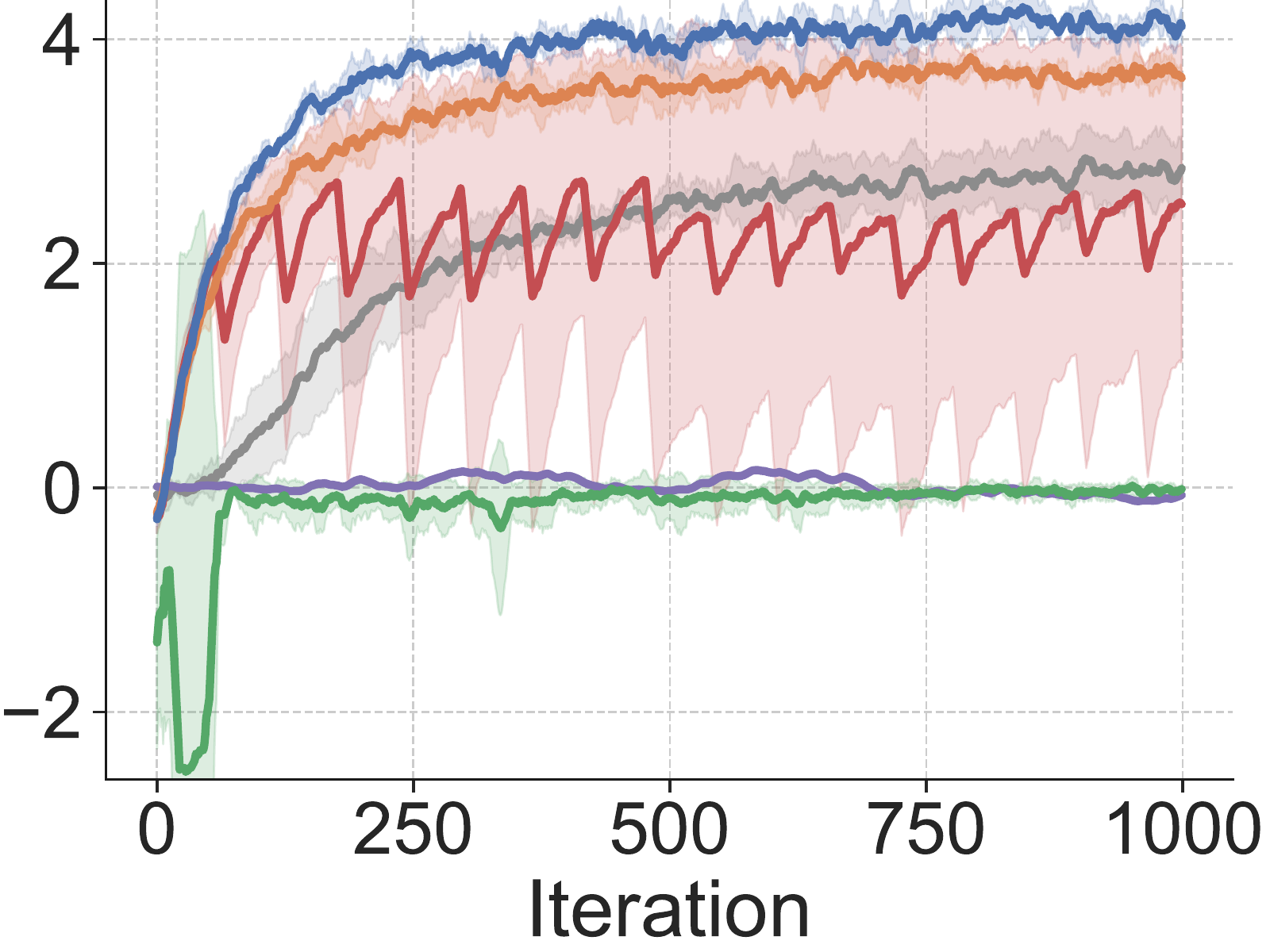}
        \caption{Ant Gather}
        \label{fig:antgather_finetune}
    \end{subfigure}

    \vspace{-0.2cm}
    \caption{Benefit of adapting some given skills when the preferences of the environment are different from those of the environment where the skills were originally trained. Adapting skills with HiPPO has better learning performance than leaving the skills fixed or learning from scratch.}
    \label{fig:finetuning}
    \vspace{-0.4cm}
\end{figure}

\subsection{Skill Diversity Assumption} 
In Lemma \ref{lemma:gradient}, we derived a more efficient and numerically stable gradient by assuming that the sub-policies are diverse. In this section, we empirically test the validity of our assumption and the quality of our approximation. We run the HiPPO algorithm on Ant Gather and Snake Gather both from scratch and with given pretrained skills, as done in the previous section. In Table \ref{tab:empirical_assumption}, we report the average maximum probability under other sub-policies, corresponding to $\epsilon$ from the assumption. In all settings, this is on the order of magnitude of 0.1. Therefore, under the $p \approx 10$ that we use in our experiments, the term we neglect has a factor $\epsilon^{p-1}=10^{-10}$. It is not surprising then that the average cosine similarity between the full gradient and our approximation is almost 1, as reported in Table \ref{tab:empirical_assumption}.

\begin{table*}[!htbp]
  \begin{center}
    \begin{tabular}{llrrr} \toprule
      \textbf{Gather} & \textbf{Algorithm} & \textbf{Cosine Sim.} & $\max_{z'\neq z_{kp}}\pi_{\theta_l}(a_t|s_t, z')$ & $\pi_{\theta_l}(a_t|s_t, z_{kp})$\\
      \hline
      \multirow{2}{*}{Snake} & HiPPO on given skills     & $0.98 \pm 0.01$ & $0.09 \pm 0.04$ & $0.44 \pm 0.03$\\
                               & HiPPO on random skills  & $0.97 \pm 0.03$ & $0.12 \pm 0.03$ & $0.32 \pm 0.04 $\\
      \hline
      \multirow{2}{*}{Ant} & HiPPO on given skills  & $0.96 \pm0.04$ & $0.11 \pm0.05$ & $0.40 \pm  0.08$\\
                           & HiPPO on random skills & $0.94 \pm0.03$ & $0.13 \pm0.05$ & $0.31 \pm 0.09$  \\
    \bottomrule
    \end{tabular}
  \end{center}
\vspace{-0.2cm}
\caption{Empirical evaluation of Lemma \ref{lemma:gradient}. In the middle and right columns, we evaluate the quality of our assumption by computing the largest probability of a certain action under other skills ($\epsilon$), and the action probability under the actual latent. We also report the cosine similarity between our approximate gradient and the exact gradient from Eq.~\ref{eq:Ptau}. The mean and standard deviation of these values are computed over the full batch collected at iteration 10.}
\label{tab:empirical_assumption}
\vspace{-0.4cm}
\end{table*}

\section{Conclusions and Future Work}
In this paper, we examined how to effectively adapt temporal hierarchies. We began by deriving a hierarchical policy gradient and its approximation. We then proposed a new method, HiPPO, that can stably train multiple layers of a hierarchy jointly. The adaptation experiments suggest that we can optimize pretrained skills for downstream environments, and learn emergent skills without any unsupervised pre-training.
We also demonstrate that HiPPO with randomized period can learn from scratch on sparse-reward and long time horizon tasks, while outperforming non-hierarchical methods on zero-shot transfer.

\newpage
\section{Acknowledgements}
This work was supported in part by Berkeley Deep Drive (BDD) and ONR PECASE N000141612723.

\bibliography{references}
\bibliographystyle{iclr2020_conference}

\newpage
\appendix
 
\section{Hyperparameters and Architectures}
The Block environments used a horizon of 1000 and a batch size of 50,000, while Gather used a batch size of 100,000. Ant Gather has a horizon of 5000, while Snake Gather has a horizon of 8000 due to its larger size. For all experiments, both PPO and HiPPO used learning rate $3\times 10^{-3}$, clipping parameter $\epsilon = 0.1$, 10 gradient updates per iteration, and discount $\gamma = 0.999$. The learning rate, clipping parameter, and number of gradient updates come from the OpenAI Baselines implementation. 

HiPPO used $n = 6$ sub-policies.  HiPPO uses a manager network with 2 hidden layers of 32 units, and a skill network with 2 hidden layers of 64 units. In order to have roughly the same number of parameters for each algorithm, flat PPO uses a network with 2 hidden layers with 256 and 64 units respectively. For HiPPO with randomized period, we resample $p \sim \text{Uniform}\{5, 15\}$ every time the manager network outputs a latent, and provide the number of timesteps until the next latent selection as an input into both the manager and skill networks. The single baselines and skill-dependent baselines used a MLP with 2 hidden layers of 32 units to fit the value function. The skill-dependent baseline receives, in addition to the full observation, the active latent code and the time remaining until the next skill sampling. All runs used five random seeds.

\section{Robot Agent Description}
\label{sec:robot_description}
Hopper is a 3-link robot with a 14-dimensional observation space and a 3-dimensional action space. Half-Cheetah has a 20-dimensional observation space and a 6-dimensional action space. We evaluate both of these agents on a sparse block hopping task. In addition to observing their own joint angles and positions, they observe the height and length of the next wall, the x-position of the next wall, and the distance to the wall from the agent. We also provide the same wall observations for the previous wall, which the agent can still interact with. 

Snake is a 5-link robot with a 17-dimensional observation space and a 4-dimensional action space. Ant is a quadrupedal robot with a 27-dimensional observation space and a 8-dimensional action space. Both Ant and Snake can move and rotate in all directions, and Ant faces the added challenge of avoiding falling over irrecoverably. In the Gather environment, agents also receive 2 sets of 10-dimensional lidar observations, whcih correspond to separate apple and bomb observations. The observation displays the distance to the nearest apple or bomb in each $36 ^{\circ}$ bin, respectively. All environments are simulated with the physics engine MuJoCo~\citep{Todorov2012MuJoCoControl}.
 
\section{Proofs}
\textbf{Lemma 1. }
If the skills are sufficiently differentiated, then the latent variable can be treated as part of the observation to compute the gradient of the trajectory probability. Concretely, if $\pi_{\theta_h}(z|s)$ and $\pi_{\theta_l}(a|s,z)$ are Lipschitz in their parameters, and $0<\pi_{\theta_l}(a_t|s_t, z_{j}) < \epsilon~\forall j \neq kp $, then 
\begin{align}
\label{eq:approx_grad}
    \nabla_\theta \log P(\tau) = \sum_{k=0}^{H/p} \nabla_\theta \log \pi_{\theta_h}(z_{kp}|s_{kp}) + \sum_{t=1}^p \nabla_\theta \log \pi_{\theta_l}(a_t|s_t, z_{kp}) + \mathcal{O}(nH\epsilon^{p-1})
\end{align}

\begin{proof}
From the point of view of the MDP, a trajectory is a sequence $\tau = (s_0, a_0, s_1, a_1, \dots, a_{H-1}, s_H) $. Let's assume we use the hierarchical policy introduced above, with a higher-level policy modeled as a parameterized discrete distribution with $n$ possible outcomes $\pi_{\theta_h}(z|s) = Categorical_{\theta_h}(n)$. We can expand $P(\tau)$ into the product of policy and environment dynamics terms, with $z_{j}$ denoting the $j$th possible value out of the $n$ choices,
$$P(\tau) = \bigg(\prod_{k=0}^{H/p}\Big[\sum_{j=1}^n \pi_{\theta_h}(z_j|s_{kp})\prod_{t=kp}^{(k+1)p-1}\pi_{\theta_l}(a_t|s_t, z_j)\Big]\bigg)
\bigg[P(s_0)\prod_{t=1}^{H}P(s_{t+1}|s_t, a_t)\bigg]$$
Taking the gradient of $\log P(\tau)$ with respect to the policy parameters $\theta = [\theta_h, \theta_l]$, the dynamics terms disappear, leaving: 
\begin{align*}
    \nabla_\theta \log P(\tau) &= \sum_{k=0}^{H/p} \nabla_\theta \log\Big(\sum_{j=1}^{n}\pi_{\theta_l}(z_j|s_{kp})\prod_{t=kp}^{(k+1)p-1}\pi_{s, \theta}(a_t|s_t, z_j)\Big)\\
                        &= \sum_{k=0}^{H/p} \frac{1}{\sum_{j=1}^{n}\pi_{\theta_h}(z_j|s_{kp})\prod_{t=kp}^{(k+1)p-1}\pi_{\theta_l}(a_t|s_t, z_j)} \sum_{j=1}^{n}\nabla_\theta \Big(\pi_{\theta_h}(z_j|s_{kp})\prod_{t=kp}^{(k+1)p-1}\pi_{\theta_l}(a_t|s_t, z_j)\Big)
\end{align*}
The sum over possible values of $z$ prevents the logarithm from splitting the product over the $p$-step sub-trajectories. This term is problematic, as this product quickly approaches $0$ as $p$ increases, and suffers from considerable numerical instabilities. Instead, we want to approximate this sum of products by a single one of the terms, which can then be decomposed into a sum of logs. For this we study each of the terms in the sum: the gradient of a sub-trajectory probability under a specific latent $\nabla_\theta \Big(\pi_{\theta_h}(z_j|s_{kp})\prod_{t=kp}^{(k+1)p-1}\pi_{\theta_l}(a_t|s_t, z_j)\Big)$. 
Now we can use the assumption that the skills are easy to distinguish, $0<\pi_{\theta_l}(a_t|s_t, z_{j}) < \epsilon~\forall j \neq kp $.
Therefore, the probability of the sub-trajectory under a latent different than the one that was originally sampled $z_j \neq z_{kp}$, is upper bounded by $\epsilon^p$. Taking the gradient, applying the product rule, and the Lipschitz continuity of the policies, we obtain that for all $z_j \neq z_{kp}$, 

\begin{align*}
    \nabla_\theta \Big(\pi_{\theta_h}(z_j|s_{kp}) \prod_{t=kp}^{(k+1)p-1}\pi_{\theta_l}(a_t|s_t, z_j)\Big) 
    &= \nabla_\theta \pi_{\theta_h}(z_j|s_{kp})\prod_{t=kp}^{(k+1)p-1}\pi_{\theta_l}(a_t|s_t, z_j) + \\ &\sum_{t=kp}^{(k+1)p-1}\pi_{\theta_h}(z_j|s_{kp}) \big(\nabla_\theta \pi_{\theta_l}(a_t|s_t, z_j)\big)\prod_{\substack{t=kp\\t' \neq t}}^{(k+1)p-1}\pi_{\theta_l}(a_{t'}|s_{t'}, z_j) \\
    &= \mathcal{O}(p\epsilon^{p-1})
\end{align*}

Thus, we can across the board replace the summation over latents by the single term corresponding to the latent that was sampled at that time.
\begin{align*}
    \nabla_\theta \log P(\tau) 
    &= \sum_{k=0}^{H/p} \frac{1}{\pi_{\theta_h}(z_{kp}|s_{kp})\prod_{t=kp}^{(k+1)p-1}\pi_{\theta_l}(a_t|s_t, z_{kp})} \nabla_\theta \Big(P(z_{kp}|s_{kp})\prod_{t=kp}^{(k+1)p-1}\pi_{\theta_l}(a_t|s_t, z_{kp})\Big) + \frac{nH}{p}\mathcal{O}(p\epsilon^{p-1}) \\
    &= \sum_{k=0}^{H/p} \nabla_\theta \log\Big(\pi_{\theta_h}(z_{kp}|s_{kp})\prod_{t=kp}^{(k+1)p-1}\pi_{\theta_l}(a_t|s_t, z_{kp})\Big) + \mathcal{O}(nH\epsilon^{p-1})\\
    &= \mathbb{E}_\tau\bigg[\Big(\sum_{k=0}^{H/p} \nabla_\theta \log \pi_{\theta_h}(z_{kp}|s_{kp}) + \sum_{t=1}^H \nabla_\theta \log \pi_{\theta_l}(a_t|s_t, z_{kp})\Big)\bigg] + \mathcal{O}(nH\epsilon^{p-1})
\end{align*}

Interestingly, this is exactly $\nabla_{\theta}P(s_0, z_0, a_0, s_1, \dots)$. In other words, it's the gradient of the probability of that trajectory, where the trajectory now includes the variables $z$ as if they were observed. 



\end{proof}

\textbf{Lemma 2. }
For any functions $b_h:\mathcal{S}\rightarrow\mathbb{R}$ and $b_l:\mathcal{S}\times\mathcal{Z}\rightarrow\mathbb{R}$ we have: 
\begin{equation*}
    \mathbb{E}_\tau[\sum_{k=0}^{H/p} \nabla_\theta \log P(z_{kp}|s_{kp})b(s_{kp})] = 0 \\
\end{equation*}
\begin{equation*}   
    \mathbb{E}_\tau[\sum_{t=0}^H \nabla_\theta \log \pi_{\theta_l}(a_t|s_t, z_{kp})b(s_{t}, z_{kp})] = 0
\end{equation*}

\begin{proof}

We can use the tower property as well as the fact that the interior expression only depends on $s_{kp}$ and $z_{kp}$:
\begin{align*}
    \mathbb{E}_\tau[\sum_{k=0}^{H/p} \nabla_\theta \log P(z_{kp}|s_{kp})b(s_{kp})] 
            &= \sum_{k=0}^{H/p}\mathbb{E}_{s_{kp}, z_{kp}}[\mathbb{E}_{\tau \setminus s_{kp}, z_{kp}}[ \nabla_\theta \log P(z_{kp}|s_{kp})b(s_{kp})]] \\
            &= \sum_{k=0}^{H/p}\mathbb{E}_{s_{kp}, z_{kp}}[\nabla_\theta \log P(z_{kp}|s_{kp})b(s_{kp})] \\\
\end{align*}
Then, we can write out the definition of the expectation and undo the gradient-log trick to prove that the baseline is unbiased. 
\begin{align*}
    \mathbb{E}_\tau[\sum_{k=0}^{H/p} \nabla_\theta \log \pi_{\theta_h}(z_{kp}|s_{kp})b(s_{kp})] 
            &= \sum_{k=0}^{H/p} \int_{(s_{kp}, z_{kp})}P(s_{kp},z_{kp}) \nabla_\theta  \log \pi_{\theta_h}(z_{kp}|s_{kp})b(s_{kp})dz_{kp}ds_{kp}\\
            &= \sum_{k=0}^{H/p}\int_{s_{kp}}P(s_{kp}) b(s_{kp}) \int_{z_{kp}} \pi_{\theta_h}(z_{kp}|s_{kp})\nabla_\theta  \log \pi_{\theta_h}(z_{kp}|s_{kp})dz_{kp}ds_{kp}\\
            &= \sum_{k=0}^{H/p}\int_{s_{kp}}P(s_{kp}) b(s_{kp}) \int_{z_{kp}} \pi_{\theta_h}(z_{kp}|s_{kp}) \frac{1}{\pi_{\theta_h}(z_{kp}|s_{kp})}\nabla_\theta \pi_{\theta_h}(z_{kp}|s_{kp})dz_{kp}ds_{kp}\\
            &= \sum_{k=0}^{H/p}\int_{s_{kp}}P(s_{kp}) b(s_{kp}) \nabla_\theta \int_{z_{kp}} \pi_{\theta_h}(z_{kp}|s_{kp})dz_{kp}ds_{kp}\\
            &= \sum_{k=0}^{H/p}\int_{s_{kp}}P(s_{kp}) b(s_{kp}) \nabla_\theta 1 ds_{kp}\\
            &= 0
\end{align*}
\end{proof}

Subtracting a state- and subpolicy- dependent baseline from the second term is also unbiased, i.e. $$\mathbb{E}_\tau[\sum_{t=0}^H \nabla_\theta \log \pi_{s, \theta}(a_t|s_t, z_{kp})b(s_{t}, z_{kp})] = 0$$

We'll follow the same strategy to prove the second equality: apply the tower property, express the expectation as an integral, and undo the gradient-log trick. 
\begin{align*}
    \mathbb{E}_\tau[\sum_{t=0}^H &\nabla_\theta \log \pi_{\theta_l}(a_t|s_t, z_{kp})b(s_{t}, z_{kp})] \\
        &= \sum_{t=0}^{H}\mathbb{E}_{s_{t}, a_t, z_{kp}}[\mathbb{E}_{\tau \setminus s_{t}, a_t, z_{kp}}[\nabla_\theta \log \pi_{\theta_m}(a_t|s_t, z_{kp})b(s_{t}, z_{kp})]]\\
        &= \sum_{t=0}^{H}\mathbb{E}_{s_{t}, a_t, z_{kp}}[\nabla_\theta \log \pi_{\theta_l}(a_t|s_t, z_{kp})b(s_{kp}, z_{kp})]\\
        &= \sum_{t=0}^{H}\int_{(s_t, z_{kp})}P(s_t, z_{kp})b(s_t, z_{kp})\int_{a_t}\pi_{\theta_l}(a_t|s_t, z_{kp})\nabla_\theta \log \pi_{\theta_l}(a_t|s_t, z_{kp})da_t dz_{kp}ds_t\\
        &= \sum_{t=0}^{H}\int_{(s_t, z_{kp})}P(s_t, z_{kp})b(s_t, z_{kp}) \nabla_\theta 1 dz_{kp} ds_t\\
        &= 0
\end{align*}

\section{HIRO sensitivity to observation-space}
\label{sec:HIRO_sensitivity}

\begin{figure}
    \centering
    \includegraphics[trim=0cm 0cm 0cm 0cm, clip, width=0.34\linewidth]{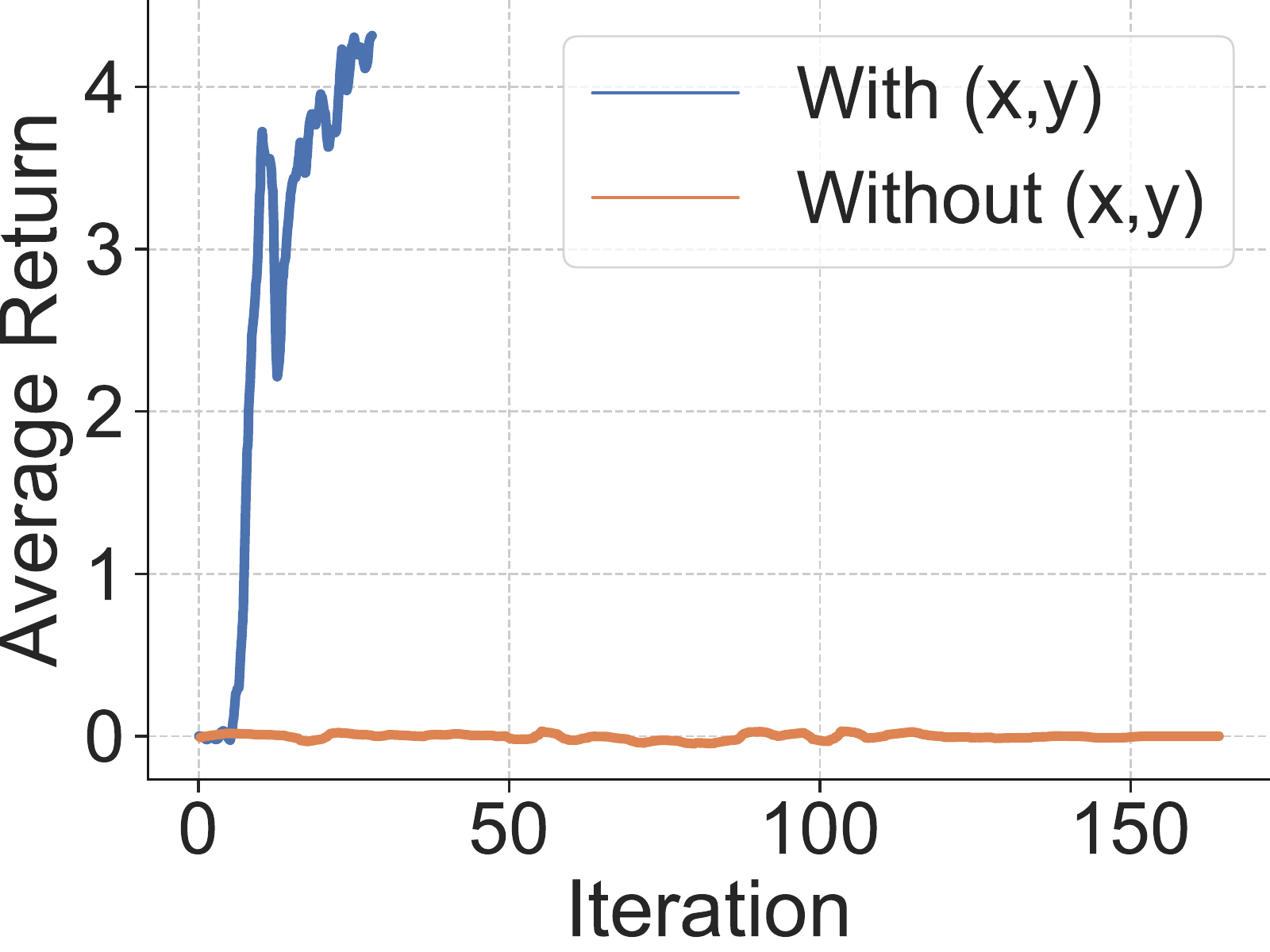}
    \caption{HIRO performance on Ant Gather with and without access to the ground truth $(x,y)$, which it needs to communicate useful goals. }
    \label{fig:hiro_failure}
\end{figure}

In this section we provide a more detailed explanation of why HIRO \citep{nachum2018hiro} performs poorly under our environments. As explained in our related work section, HIRO belongs to the general category of algorithms that train goal-reaching policies as lower levels of the hierarchy \citep{vezhnevets2017fun, levy2017hac}. These methods rely on having a goal-space that is meaningful for the task at hand. For example, in navigation tasks they require having access to the $(x,y)$ position of the agent such that deltas in that space can be given as meaningful goals to move in the environment. Unfortunately, in many cases the only readily available information (if there's no GPS signal or other positioning system installed) are raw sensory inputs, like cameras or the LIDAR sensors we mimic in our environments. In such cases, our method still performs well because it doesn't rely on the goal-reaching extra supervision that is leveraged (and detrimental in this case) in HIRO and similar methods. In Figure \ref{fig:hiro_failure}, we show that knowing the ground truth location is critical for its success. We have reproduced the HIRO results in Fig.~\ref{fig:hiro_failure} using the published codebase, so we are convinced that our results showcase a failure mode of HIRO.

\section{Hyperparameter Sensitivity Plots}
\label{sec:hyperparameter_sensitivity}

\begin{figure}[!htbp]
    \centering
    \begin{subfigure}[t]{0.34\linewidth}
        \centering
        \includegraphics[trim=0cm 0cm 0cm 0cm, clip, width=\linewidth]{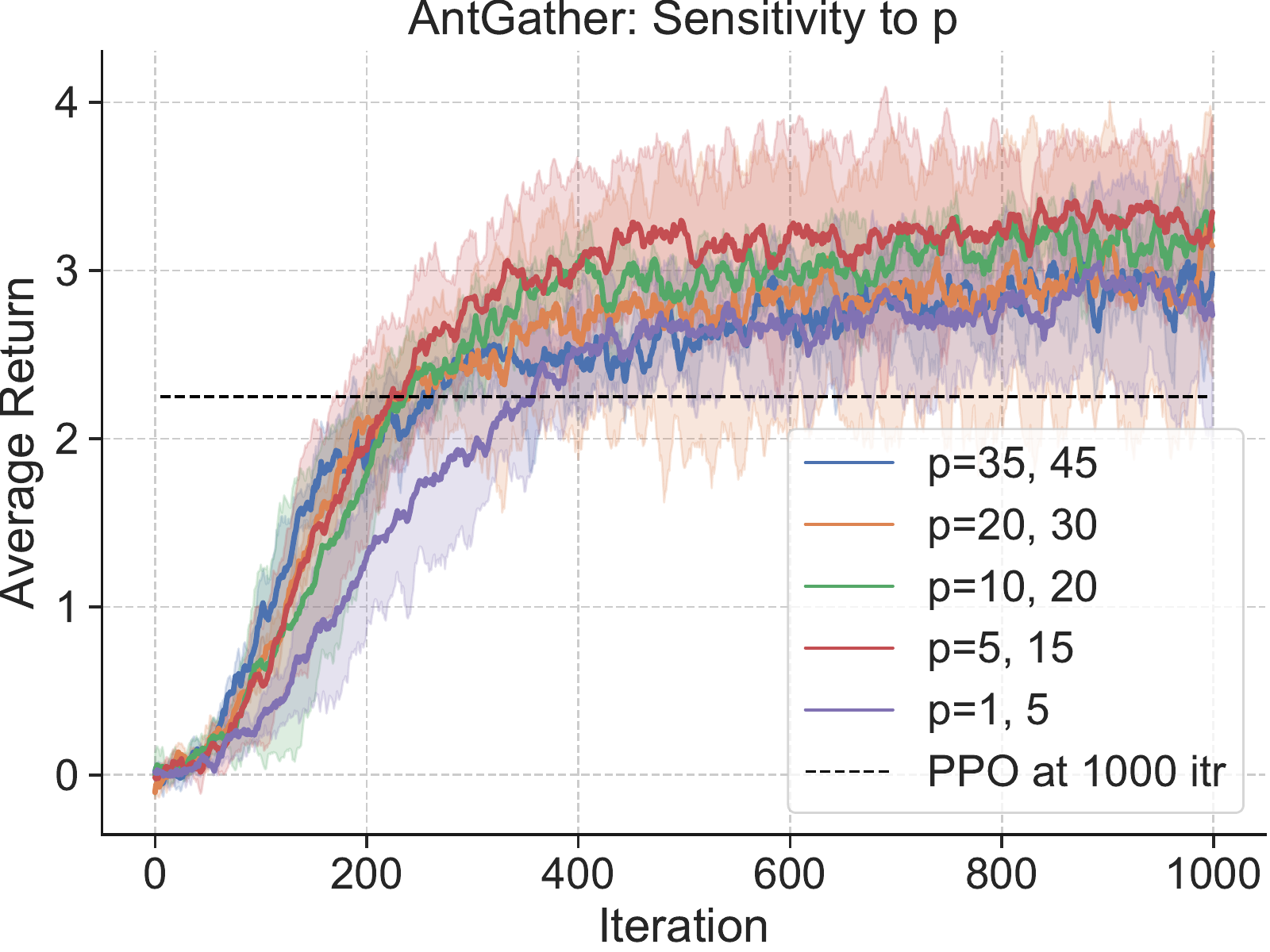}
    \end{subfigure}
    ~
    \begin{subfigure}[t]{0.34\linewidth}
        \centering
        \includegraphics[trim=0cm 0cm 0cm 0cm, clip, width=\linewidth]{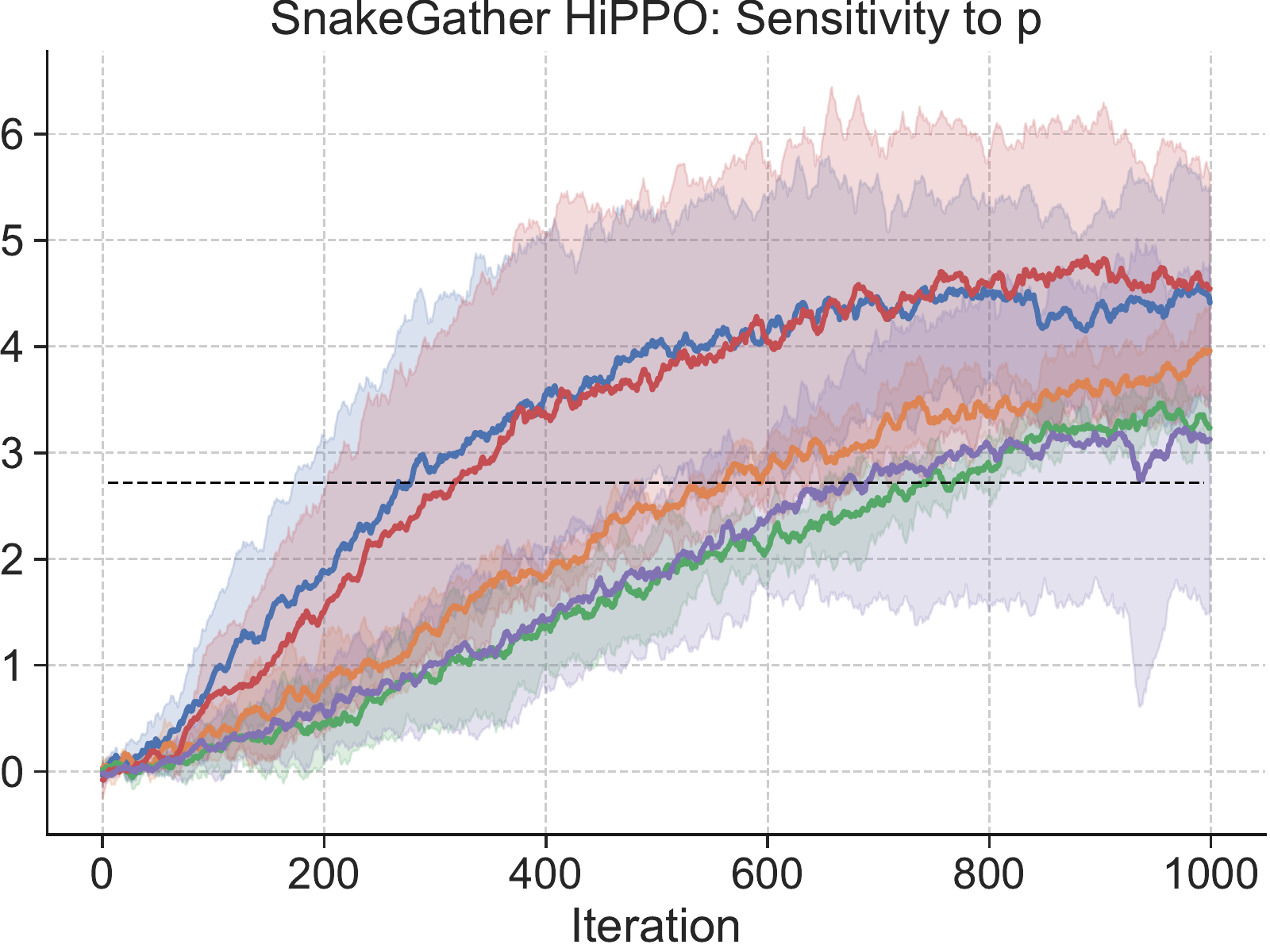}
    \end{subfigure}
    ~
   
    \caption{Sensitivity of HiPPO to variation in the time-commitment.}
    \label{fig:time_sensitivity}
\end{figure}

\begin{figure}[!htbp]
    \centering
    \begin{subfigure}[t]{0.34\linewidth}
        \centering
        \includegraphics[trim=0cm 0cm 0cm 0cm, clip, width=\linewidth]{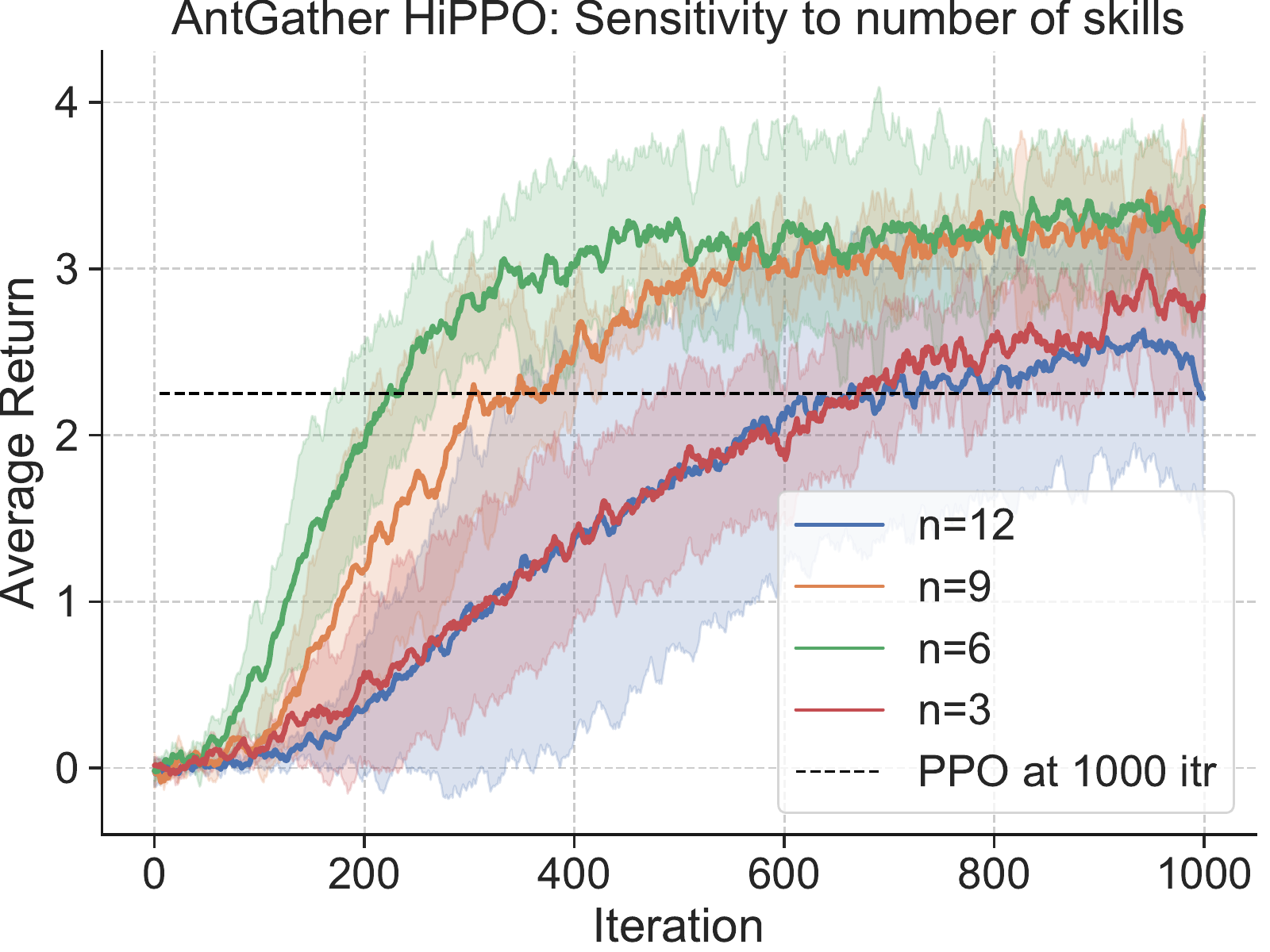}
    \end{subfigure}
    ~
    \begin{subfigure}[t]{0.34\linewidth}
        \centering
        \includegraphics[trim=0cm 0cm 0cm 0cm, clip, width=\linewidth]{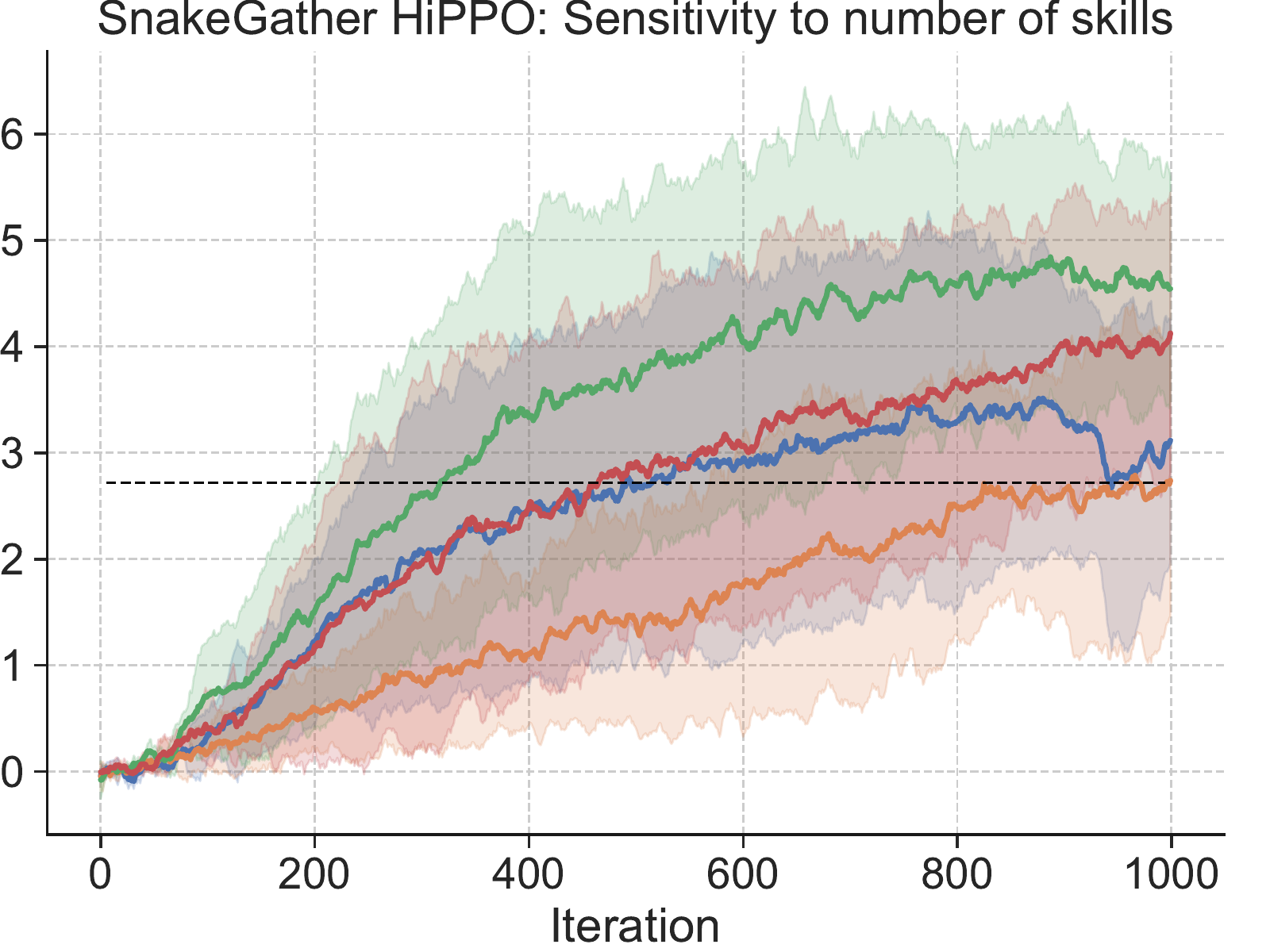}
    \end{subfigure}
   
    \caption{Sensitivity of HiPPO to variation in the number of skills.}
    \label{fig:skillnumber_sensitivity}
\end{figure}

\end{document}